\newtheorem{definition}{Definition}
\newtheorem{theorem}{Theorem}
\newtheorem{lemma}{Lemma}
\newcommand{\NAM}{\textsc{dist-UCRL}}
\begin{document}

\title{Communication Efficient Parallel Reinforcement Learning}

\author{Mridul Agarwal, Bhargav Ganguly, and Vaneet Aggarwal
\\\{agarw180,bganguly,vaneet\}@purdue.edu
\\Purdue University, West Lafayette IN 47907}  %

\maketitle

\begin{abstract}
  We consider the problem where $M$ agents interact with $M$ identical and independent environments with $S$ states and $A$ actions using reinforcement learning for $T$ rounds. The agents share their data with a central server to minimize their regret. We aim to find an algorithm that allows the agents to minimize the regret with infrequent communication rounds. We provide \NAM\ which runs at each agent and prove that the total cumulative regret of $M$ agents is upper bounded as $\Tilde{O}(DS\sqrt{MAT})$ for a Markov Decision Process with diameter $D$, number of states $S$, and number of actions $A$. The agents synchronize after their visitations to any state-action pair exceeds a certain threshold. Using this, we obtain a bound of $O\left(MSA\log(MT)\right)$ on the total number of communications rounds. Finally, we evaluate the algorithm against multiple environments and demonstrate that the proposed algorithm performs at par with an always communication version of the UCRL2 algorithm, while with significantly lower communication.
\end{abstract}

\section{Introduction}\label{sec:intro}

Reinforcement Learning (RL) is being increasingly deployed and trained with parallel agents. In many cases, each agent interacts with identical and independent environments. For example, in autonomous cars, the agents do not share the environment as they are located possibly far away \citep{kiran2020deep}. Similarly, in ride/freight sharing services, different RL agents (vehicles) make decisions in parallel to minimize their costs and maximize the profits \citep{al2019deeppool,manchella2021flexpool}. Further, consider an example of an e-commerce company using RL agents in servers for recommending products to customer. Based on the location of the customer, each customer's query may potentially be routed to a particular server (or agent) \citep{sankararaman2019social}. Finally in the field of robotics, parallel robots are often deployed in practice \citep{hu2020voronoi,sartoretti2019distributed}. From these examples, we note that every agent would gain  by sharing their data collected, otherwise they would be wasting the knowledge accumulated by the remaining agents. Such parallel policy learning with limited amount of communication is the focus of this paper. 

Sharing of data across agents poses a new set of problems which comes from communicating the samples. If the agents communicate the observation tuple (state, action, reward, next state) at every time step, their communication complexity will increase. For various power constrained devices such as small robots, this luxury might not always be available \citep{sankararaman2019social}. In this paper, we aim to work on the problem to reduce the number of communication steps while obtaining the same regret bounds as that of an strategy in which the agents always communicate.

For a system with $M$ parallel agents, the parallel agents generate $M$ times more data. From the lens of regret analysis, this setup loosely translates to a setup where a single agent works for time horizon of $MT$. For the setup where a single agent runs for $MT$ steps, the regret is lower bounded by $O(\sqrt{DSMAT}$ and upper bounded by $\Tilde{O}(DS\sqrt{MAT})$ \citep{jaksch2010near,agrawal2017optimistic}. We show that, if the agents take their actions sequentially and communicate after every interaction with their respective environments, then the agents can collectively obtain a regret bound of $\Tilde{O}(DS\sqrt{MAT})$.
This results in a faster convergence rate of $O(1/\sqrt{MT})$ to the optimal policy as compared to the convergence rate of $O(1/\sqrt{T})$ when using one RL agent. Note that in practice, a sequential decision making for parallel independent agents cannot be guaranteed. Thus, this result acts as a lower bound to parallel reinforcement learning. In this paper, we further provide an algorithm, in which agents work in parallel, that achieves these bounds with limited communication. 

We consider a setup where $M$ reinforcement learning agents interact with $M$ identical environments or Markov Decision Processes (MDPs) \citep{sutton2018reinforcement,puterman1994markov}. Similar to the examples considered above, the $M$ environments are independent. We assume that there also exists a central coordinator (server), although we suggest a method to relax this assumption. All the agents report their experiences to the central coordinator which then computes and shares a policy with the agents. We consider that the central coordinator uses a model based algorithm to compute the policy and each agent stores the number of visitations made to any particular (state, action, next state) tuple. The central coordinator also shares the total visitations to each state-action pair back to all the agents.

Based on this setup, we  provide a novel communication efficient \NAM\ algorithm. In \NAM\ algorithm, the agents communicate with the central coordinator whenever any agent visits any state action pair $s,a$ in the current epoch (since last synchronization) a fraction ($1/M$)   of the total visitation counts in $s,a$ till last synchronization, instead of every $N\geq 1$ time step. Using this synchronization strategy, for an MDP with diameter $D$, number of states $S$, and number of actions $A$, using \NAM\ algorithm, we bound the cumulative regret of $M$ agents for $T$ time horizon scales as $O(DS\sqrt{MAT})$ using only $O(MAS\log_2(MT))$ synchronization steps\footnote{We use the term communication round and synchronization step interchangeably.}. Note that \NAM\ not only achieves the lower bound regret scaling ($\Tilde{O}(\sqrt{MT})$) but also with limited communication. %

To obtain our results, we also derive a concentration bound on $M$ independent Martingale sequences which can be of independent interest. To the best of our knowledge, this is the first work in the direction of obtaining the performance guarantees using regret analysis for a setup where $M$ agents interact and collaborate.%

We also evaluate our algorithms empirically. We run the proposed \NAM\ algorithm in multiple environments. We compare the \NAM\ algorithm against the modified UCRL algorithm, following which the agents communicates after every time step.  We show that \NAM\ algorithm obtain similar regret bound  with reduced communication.

The rest of the paper is organized as follows. Section \ref{sec:related_works} summarizes the key related works. Section \ref{sec:sys_model} describes the complete system model. Section \ref{sec:algo_description} describes the \NAM\ algorithm, and the regret guarantees of the algorithm are provided in Section \ref{sec:main_results}. Section \ref{sec:mod-uc} describes a modification of UCRL2, \textsc{mod-UCRL2}, and its regret guarantees. Section \ref{sec:evaluations} tests the proposed algorithms in multiple environments, and Section \ref{sec:concl} concludes the paper with some possible directions for future works. 

\section{Related Works}\label{sec:related_works}
Optimal planning using Markov Decision Processes has seen numerous significant contributions in history. Many algorithms have been proposed to find the optimal policies. The major algorithms include Q-learning, and policy iteration to find optimal policies \citep{howard1960dynamic,puterman1994markov,bertsekas1995dynamic,sutton2018reinforcement}. Fundamentally, the algorithms work by calculating the utility of taking certain actions in states that maximizes the utility of the state. These algorithms provide optimal policy when the transition probabilities of the Markov Decision Process is known or an $\epsilon$-optimal policy if using iterative algorithms \citep{puterman1994markov}.

A large body of work is also done in the setup where the transition probabilities are not known. The model-based algorithms work by reducing the number samples required to obtain a close estimate of transition probabilities \citep{bartlett2009regal,jaksch2010near}. The model-free algorithms work by directly estimating the utilities obtained by taking an action in a state \citep{jin2018q}. These algorithms apply optimism in the face of uncertainty principle to find a model near the empirical estimates that provides the highest reward.  There are also algorithms that sample the transition probabilities using posterior sampling and obtain regret bound \citep{ian2013more,agrawal2017optimistic}. The algorithms suggest that using parallel agents interacting independent and identical environments will provide tighter concentration inequalities and hence will help in reducing the regret.

In the domain of multiple agents using RL, most of the work is done where the agents interact with a dependent environment and the decision of one agent impacts all the other agents. This area is known as Multi-Agent Reinforcement Learning (MARL) \citep{gupta2017cooperative,zhang2018fully}. The agents may cooperate with each other, for example, in the case of autonomous vehicles yielding on a busy road. Or the agents may also compete with each other, for example, in the case of car racing where only one car may win. Although, we do have multiple agents in our setup, the environments are independent. Hence, we will refrain from using MARL terminology in this paper.

With the introduction of deep learning to the field of reinforcement learning, many ``Deep RL'' algorithms have been provided to minimize the sample complexity to find the optimal policies \citep{mnih2013playing,schulman2015trust,mnih2016asynchronous,schulman2017proximal,haarnoja2018soft}. Recently, various algorithms are using parallel actors to learn better policies using deep reinforcement learning \citep{nair2015massively,clemente2017efficient,horgan2018distributed,espeholt2018impala,assran2019gossip}. These algorithms consist of parallel agents that share the entire sequence of (state, action, reward, next state) tuples after every $n$ epochs, and update a common neural network with the gradients computed from possibly parallel learners. 
Compared to these works, we consider a model based setup to obtain regret guarantees with only $O(MAS\log_2(MT))$ number of synchronization rounds with a common controller learning the policy.

Recently, in the area of Bandits \citep{lattimore2020bandit}, there has been a thrust on distributed bandits with reduced number of communication rounds. Various algorithms have been proposed to minimize the regret for setups where the agents synchronize a central coordinator \citep{kanade2012distributed,hillel2013distributed,wang2019distributed,dubey2020differentially}. Recently, \cite{wang2019distributed} showed that it is possible to obtain optimal regret guarantees with number of rounds that are independent of $T$ in stochastic Multi-Armed Bandits and Linear Bandits. Further, \cite{dubey2020differentially} considered a linear bandit setup  and aimed to protect the privacy of the agents collaborating along with minimizing the number of communications rounds. Moreover, \citep{chawla2020gossiping,sankararaman2019social} considered a gossiping setup to communicate only the index of the best arm, thus reducing not only the number of communication rounds, but also the number of bits in each communication round. Similar to the bandit setup, we also attempt to find rigorous regret guarantees of the \NAM\ algorithm and a bound on the number of communication rounds.
\section{System Model}\label{sec:sys_model}
Let $[K]$ be the set of $K$ elements, or $[K] = \{1,2, \cdots, K\}$. We consider an MDP $\mathcal{M} = (\mathcal{S},\mathcal{A}, P, \bar{r})$, where $\mathcal{S} = [S]$ is the set of finite states and $\mathcal{A} = [A]$ is the set of finite actions. $P$ denotes the transition probabilities, \textit{i.e.}, on taking action $a\in\mathcal{A}$ in state $s\in\mathcal{S}$, the next state $s'\in\mathcal{S}$ follows distribution $P(\cdot|s,a)$. Also, on taking action $a\in\mathcal{A}$ in state $s\in\mathcal{S}$, an agent receives a stochastic reward $r$ drawn from a distribution over $[0,1]$ with mean $\bar{r}(s,a)$. 

We consider $M$ agents in the system, each interacting with $M$ identical environments. Let $i\in[M]$ be the indexing for agents and their corresponding environment. For an agent $i$, at time step $t$, let $s_{i,t}$ denote the state of the agent, $a_{i,t}$ be the action taken by the agent, and $r_{i,t}$ denote the reward obtained by the agent on taking action $a_{i,t}$ in state $s_{i,t}$. We assume that the $M$ environments are independent. Mathematically, $\forall~i\in[M] \text{ and }\forall~t\geq 1$, we have,
\begin{align}
    &\mathbb{P}(s_{i,t+1}|s_{1,t}, a_{1,t}, \cdots, s_{M,t}, a_{M,t},) = \mathbb{P}(s_{i,t+1}|s_{i,t}, a_{i,t})\nonumber,
\end{align}
and we take a similar assumption over the rewards $r_{i,t}$ as well. This means, the distribution of the next state $s_{i,t+1}$ and the rewards $r_{i,t}$ of agent $i\in[M]$ are conditioned only on the knowledge of the current state and action $(s_{i,t}, a_{i,t})$ of the agent $i$ and is independent of the other states and actions of the remaining agents.

Let policy $\pi:\mathcal{S}\to\mathcal{A}$ be a function to determine which action to select in state $s\in\mathcal{S}$. Note that each policy induces a Markov Chain on the states $\mathcal{S}$ with transition probabilities $P_{s,s} = P(\cdot|s,\pi(s))$. After defining a policy, we can now define the diameter of the MDP $\mathcal{M}$ as:
\begin{definition}[Diameter]
Consider the Markov Chain induced by the policy $\pi$ on the MDP $\mathcal{M}$. Let $T(s'|\mathcal{M}, \pi, s)$ be a random variable that denotes the first time step when this Markov Chain enters state $s'$ starting from state $s$. Then, the diameter of the MDP $\mathcal{M}$ is defined as:
\begin{align}
    D(\mathcal{M}) = \max_{s'\neq s}\min_{\pi}\mathbb{E}\left[T(s'|\mathcal{M}, \pi, s)\right]
\end{align}
\end{definition}
We assume that the MDP $\mathcal{M}$ has a finite diameter which means that there is a policy, such that following that policy all $s\in\mathcal{S}$ communicate with each other.

Any agent $i\in[M]$ starting from an initial random state $s_{i,1} = s$ follows an algorithm $\mathscr{A}$ till $T$ time steps to collect a cumulative reward of $R_i$. Also, let $\rho_i$ denote the average reward of the agent following algorithm $\mathscr{A}$. Or,
\begin{align}
    R_i(\mathcal{M}, \mathscr{A}, s, T) &= \sum\nolimits_{t=0}^Tr_{i,t}\\
    \rho_i(\mathcal{M}, \mathscr{A}, s) &= \lim_{T\to\infty}\frac{1}{T}\sum\nolimits_{t=0}^T\mathbb{E}\left[r_{i,t}\right]
\end{align}
Let there be an algorithm $\mathscr{A}$ which always selects action according to a stationary policy $\pi:\mathcal{S}\to\mathcal{A}$. Then, we denote $\rho(\mathcal{M}, \mathscr{A},s) = \rho(\mathcal{M}, \pi, s)$. The optimal average reward does not depend on the state \citep[Section 8.3.3]{puterman1994markov}, and hence for the optimal policy $\pi^*$ which maximizes the average reward $\rho(\mathcal{M}, \pi, s)$ we have,
\begin{align}
    \rho^*(\mathcal{M}) \coloneqq \rho^*(\mathcal{M},s) \coloneqq \max\nolimits_{\pi}\rho(\mathcal{M},s)\label{eq:optimal_reward_state_independent}.
\end{align}
Further, the optimal average reward satisfies \citep[Theorem 8.4.7]{puterman1994markov},
\begin{align}
    \rho^* + v(s) = \bar{r}(s, a^*) + \sum\nolimits_{s'}P(s'|s,a^*)v(s')~\forall s\in\mathcal{S}\label{eq:gain_bias_eqn}
\end{align}
where $a^*=\pi^*(s)$, and $v(s)$ is called the bias of the state $s$ and it denotes the extra reward obtained from starting in the state $s$. Note that $v(s)$ is not unique since if $v(s)~\forall~s$ satisfy Equation \eqref{eq:gain_bias_eqn}, then so does $v(s) + c$ for all $c\in\mathbb{R}$ and hence, the bias is translation invariant.

We aim to maximize the cumulative reward collected by all the agents. Hence, we want to develop an algorithm that, starting from no knowledge about the system, learns a policy that minimizes the regret. The regret of an algorithm $\mathscr{A}$, for starting state $s$, and running for time $T$, is defined as:
\begin{align}
    \Delta(\mathcal{M}, \mathscr{A}, s, T) := \rho^*MT - \sum\nolimits_{i \in [M]} R_i(\mathcal{M}, \mathscr{A}, s, T) \nonumber%
\end{align}
We will now present our algorithm \NAM\ that uses upper confidence bounds to bound the regret $\Delta(\mathcal{M}, \NAM, s, T)$ with high probability with only $O(MAS\log_2(MT))$ communication rounds.
\section{\NAM\ Algorithm}\label{sec:algo_description}
We consider that each agent runs an instance of the \NAM\ algorithm. The \NAM\ algorithm running at agent $i$ is described in Algorithm \ref{alg:dist_UCRL}. The algorithm proceeds in epochs indexed as $k = 1, 2, \cdots$. The start of every epoch also coincides with the synchronization step where every agent communicates with the central node to share data and update policies. This also implies that the number of synchronization rounds requires by the algorithm are the same as the number of epochs for the algorithm runs. 

\begin{algorithm}[!t]
	\caption{\NAM\ at agent $i$} \label{alg:dist_UCRL}
    \begin{algorithmic}[1]
            \STATE \textbf{Input: }{$S, A, M$}
            \STATE Set parameters $P_i(s,a,s') = 0\forall (s,a,s')\in\mathcal{S}\times\mathcal{A}\times\mathcal{S}$ and $\hat{r}_i(s,a) = 0\forall (s,a)\in\mathcal{S}\times\mathcal{A}$.
            \FOR{Epochs: $k= 1, 2, \cdots$}
                \STATE Set $\nu_{i,k}(s, a) = 0\forall (s,a)\in\mathcal{S}\times\mathcal{A}$.
                \STATE $\pi_k, N_k =$ \textsc{synchronize}$(P_i, \hat{r}_i, t)$
                \WHILE{$\nu_{i,k}(s,a) < \max(1, N_k(s,a))/M\forall (s,a)$ and Synchronization not requested}
    	            \STATE Play action $a_{i,t}=\pi_k(s_{i,t})$, observe reward $r_t$ and next state $s_{i,t+1}$.
    	            \STATE Set $\nu_{i,k}(s_{i,t},a_{i,t}) = \nu_{i,k}(s_{i,t}, a_{i,t}) + 1$, $P_i(s_{i,{i,t}}, a_{i,t}, s_{t+1}) = P_i(s_{i,t}, a_{i,t}, s_{i,t+1}) + 1$, $\hat{r}_i(s_{i,t}, a_{i,t}) = \hat{r}_i(s_{i,t}, a_{i,t}) + r_t$.
    	            \STATE Set $t = t+1$.
                \ENDWHILE
                \STATE Request synchronization
            \ENDFOR
    \end{algorithmic}
\end{algorithm}
Algorithm \ref{alg:dist_UCRL} running at agent $i$, maintains two counters, $\nu_{i,k}(s,a)$ and $P_i(s,a,s')$. $\nu_{i,k}(s,a)$ counts the number of visitations to state action pair $(s,a)$ in epoch $k$, and $P_i(s,a,s')$ counts the instances when the agent moves to state $s'$ on taking action $a$ in state $s$. The agent also stores $\hat{r}_i(s,a)$ as the cumulative reward obtained in $(s,a)$. 

\begin{algorithm}[tbp]
	\caption{\textsc{synchronize} at central node} \label{alg:synch}
    \begin{algorithmic}[1]
            \STATE \textbf{Input: }{$P_i, \hat{r}_i$} from all agents $i\in[M]$, $t$.
            \FOR{$(s,a)\in \mathcal{S}\times\mathcal{A}$}
            \STATE Set $N(s,a) = \sum_{i}\sum_{s'}P_i(s,a,s')$.
            \STATE Set $\hat{p}(s,a,s') = \frac{\sum_{i}P_i(s,a,s')}{\max\{1, N(s,a)\}}$
            \STATE Set $\hat{\Bar{r}}(s,a) = \frac{\sum_{i}\hat{r}_i(s,a)}{\max\{1, N(s,a)\}}$
            \STATE Set $\Tilde{r}(s,a) = \hat{\Bar{r}}(s,a) + \sqrt{\frac{7\log(2MSAt)}{2\max\{1, N(s,a)\}}}$
            \STATE Set $d(s,a) = \sqrt{\frac{14S\log(2MAt)}{\max\{1, N(s,a)\}}}$
            \ENDFOR
            \STATE Set $\pi$ = \textsc{Extended Value Iteration}($\hat{p}, d, \Tilde{r}, \frac{1}{\sqrt{Mt}}$)
            \STATE \textbf{Return} $\pi, N$
    \end{algorithmic}
\end{algorithm}

Let $N_k(s,a) = \sum_{i=1}^M\sum_{k'=1}^{k-1}\nu_{i,k}(s,a)$ be the total number of visitations till the start of epoch $k$ for all agents. And hence, $N_k(s,a) = 0$ at $k=1$ for all $s,a\in\mathcal{S}\times\mathcal{A}$. At the start of every epoch $k\geq 1$, the agents obtains the policy for epoch $k$ and total visitations to any state action pair $N_k(s,a)$. We will denote the time step at which epoch $k$ start and agents synchronize of the $k^{th}$ time with $t_k$. At $t=1$, the algorithm synchronizes all the agents for the very first time, or $t_1 = 1$. Later, a new epoch is triggered whenever any of the agent requests for synchronization\footnote{The synchronization request can be sent to server, which will in turn pause and synchronize the entire system.}. An agent $i$ requests synchronization whenever $\nu_{i,k}(s,a)$ becomes at least $1/M$ of $N_k(s,a)$ for any state action pair. We assume that every agent is able to receive the synchronization signal instantly and stop further processing of the current epoch. The algorithm calls \textsc{synchronize} algorithm every time after a new epoch starts and updates the policy $\pi_k$ and $N_k(s,a)$ values. Every agent now selects actions according to the policy $\pi_k$ in the epoch $k$.

The \textsc{synchronize} algorithm is described in Algorithm \ref{alg:synch}. This algorithm calculates the estimates of the transition probability $\hat{p}(\cdot|s,a)$ and the mean rewards $\hat{\Bar{r}}(s,a)$ using the samples from all the $M$ agents. We now consider a set of all plausible MDPs $\mathscr{M}(t)$ that exist in the neighborhood of the estimated MDP $\widehat{\mathcal{M}} = (\mathcal{S}, \mathcal{A}, \hat{p}, \hat{\Bar{r}})$. The mean rewards $r'(s,a)$ and the transition probabilities $p'(s,a)$ for all the MDPs in the set $\mathscr{M}(t)$ satisfies:
\begin{align}
    |\hat{\Bar{r}}(s,a) - r'(s,a)| &\leq \sqrt{\frac{7\log(2MSAt)}{2\max\{1,N(s,a)\}}}\label{eq:reward_deviation}\\
    \|\hat{p}(\cdot|s,a) - p'(\cdot|s,a)\|_1 &\leq \sqrt{\frac{14S\log(2MAt)}{\max\{1,N(s,a)\}}}\label{eq:prob_deviation}
\end{align}
After obtaining $\mathscr{M}(t)$, Algorithm \ref{alg:synch} calls the \textsc{Extended Value Iteration} algorithm which then computes the optimal policy for the optimistic MDP $\Tilde{\mathcal{M}}_t$ in the set $\mathscr{M}(t)$. The optimistic MDP satisfies $\rho^*(\Tilde{\mathcal{M}}) = \sup_{\mathrm{M}\in\mathscr{M}(t)}\rho^*(\mathrm{M})$. As described in \citep{jaksch2010near}, it is not trivial to directly find the optimistic MDP in $\mathscr{M}(t)$. Hence, we consider an extended MDP $\mathcal{M}_t^+$ which is constructed with the same state space and a continuous action space $(a,q(\cdot|\cdot, a))\in\mathcal{A}\times\mathscr{P}_t$, where $\mathscr{P}_t$ is the set of transition probabilities for action $a\in\mathcal{A}$ that satisfies Equation \eqref{eq:prob_deviation}. When $\mathscr{M}(t)$ contains the true MDP $\mathcal{M}$, the diameter of the extended MDP $\mathcal{M}_t^+$ is bounded by $D$ as the policy for which all states communicate with each other for MDP $\mathcal{M}$ also ensures that all states communicate in the extended MDP $\mathcal{M}_t^+$ as well.

\begin{algorithm}[tbp]
	\caption{\textsc{Extended Value Iteration}} \label{alg:evi}
    \begin{algorithmic}[1]
            \STATE \textbf{Input: }{$\hat{p}, d, \Tilde{r}, \epsilon$}.
            \STATE Set $u_0(s,a) = 0, u_1(s) = \max_a\Tilde{r}(s,a), i = 1$.
            \STATE Set $\pi(s) = \arg\max_a\Tilde{r}(s,a)$
            \WHILE{$\max_s\{u_i(s)-u_{i-1}(s)\} - \min_s\{u_i(s)-u_{i-1}(s)\} \geq \epsilon $}
                \STATE Sort $s_1', \cdots, s_S'$ such that $u_i(s_1')\geq \cdots\geq u_i(s_S')$.
                \STATE Set $p(s_1) = \min\{1, \hat{p}(s'|s_1, a) + d(s_1,a)/2\}$
                \STATE Set $p(s_n) = \hat{p}(s'|s_n, a), n = 2, 3, \cdots, S$.
                \STATE Set $l=1$
                \WHILE{$\sum_{s}p(s)> 1$}
                    \STATE Set $p(s_l) = \max\{0, 1-\sum_{s_n\neq s_l}p(s_n)\}$
                    \STATE Set $l = l-1$
                \ENDWHILE
                \STATE $i = i+1$
                \STATE $u_i(s) = \max_a\left\{\Tilde{r}(s,a) + \sum_{s'}p(s')u_{i-1}(s')\right\}$
                \STATE $\pi(s) = \arg\max_a\left\{\Tilde{r}(s,a) + \sum_{s'}p(s')u_{i-1}(s')\right\}$
            \ENDWHILE
            \STATE \textbf{Return} $\pi$
    \end{algorithmic}
\end{algorithm}

The \textsc{Extended Value Iteration} algorithm (Algorithm \ref{alg:evi}) follows the design of the Extended Value Iteration of UCRL2 algorithm by \cite{jaksch2010near}. As described by \cite{jaksch2010near}, \textsc{Extended Value Iteration} (\textsc{EVI}) obtains a policy $\pi$ that is $\epsilon$-optimal for the extended MDP $\mathcal{M}_t^+$ and in turn the optimistic MDP $\Tilde{\mathcal{M}}$. Algorithm \ref{alg:evi} calculates the values of the states of the extended MDP $\mathcal{M}_t^+$. The extended value iteration calculates the utilities of the states and the actions that achieve this utility. Note that unlike the extended value iteration in UCRL2 algorithm, we consider Algorithm \ref{alg:evi} to be converged when we have $\max_s(u_{i+1}- u_i(s)) - \min_s(u_{i+1}-u_i(s)) \leq \epsilon = 1/\sqrt{Mt}$. This is because we now have $M$ times more samples till any time step $t$ as compared to the UCRL algorithm. The EVI, at start of epoch $k$, returns a policy $\pi_k$ that satisfies $\rho(\Tilde{\mathcal{M}}, \pi_k)\geq \rho^*(\Tilde{\mathcal{M}}_{t_k}) - 1/\sqrt{Mt_k}$.

We also note that the central controller is not necessariliy required if the agents are in a completely connected network, they can share there data with each other and run the algorithms for the central controller by themselves. Further, the completely connected assumption can also be relaxed by considering a setup where all agents forward the messages they get. This allows the broadcast of the information. Hence, the proposed algorithm can be generalized to any network structure as long as all the agents are connected via some path.
\section{Results for \NAM\ }\label{sec:main_results}
After describing the algorithm, we now bound the regret of the \NAM\ algorithm. We show that the regret bound holds with high probability. We bound the regret incurred by the \NAM\ algorithm in the form of following theorem.

\begin{theorem}\label{thm:regret_bound}
For a MDP $\mathcal{M} = ([S], [A], P, r)$ with diameter $D$, 
for any starting state $s$, the regret of the \NAM\ algorithm, running on $M$ agents for $T$ time steps, is upper bounded with probability at least $1-\frac{1}{(MT)^{5/4}}$ as:
\begin{align}
    \Delta(\mathcal{M}, \NAM, s, T) \leq \Tilde{O}(DS\sqrt{MAT})\label{eq:regret_bounds}
\end{align}
where $\Tilde{O}$ hides the poly-log terms in $M,S, A,$ and $T$.
\end{theorem}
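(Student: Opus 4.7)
The plan is to follow the UCRL2 regret decomposition of Jaksch et al., but redoing each step to accommodate (i) pooled estimators built from data collected by all $M$ agents and (ii) the $M$ parallel, mutually-stopping epoch structures induced by the synchronization rule. Throughout I would condition on the good event $\mathcal{E}$ that the true MDP $\mathcal{M}$ lies in the confidence set $\mathscr{M}(t_k)$ at every synchronization time $t_k$. Because each agent's trajectory is i.i.d.\ from $\mathcal{M}$, the pooled reward and transition estimators maintained by the central server are sample averages over $N_k(s,a)$ independent samples from the correct distributions, so a standard Hoeffding plus $L_1$-deviation argument (essentially the one used for UCRL2, but with the union bound inflated by a factor $M$ for the $M$ agents and over all possible synchronization times $t \leq MT$) shows that the widths in Equations~\eqref{eq:reward_deviation}--\eqref{eq:prob_deviation} make $\mathbb{P}(\mathcal{E}^c) \lesssim (MT)^{-5/4}$.

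On $\mathcal{E}$, because $\tilde{\mathcal{M}}_{t_k}$ is optimistic and the EVI tolerance is $1/\sqrt{Mt_k}$, we have $\rho^{*}(\mathcal{M}) \leq \rho(\tilde{\mathcal{M}}_{t_k}, \pi_k) + 1/\sqrt{M t_k}$. I would then rewrite the per-epoch regret contributed by agent $i$ in epoch $k$, namely $\sum_{t \in \mathrm{epoch}_{i,k}} (\rho^{*} - r_{i,t})$, using the optimistic Bellman equation for $(\pi_k, \tilde{\mathcal{M}}_{t_k})$. This produces the familiar three-part decomposition: (a) a reward-deviation term bounded by the width in~\eqref{eq:reward_deviation} weighted by $\nu_{i,k}(s,a)$, (b) a transition-deviation term of the form $\sum_{s,a}\nu_{i,k}(s,a)\,\|p_k(\cdot\mid s,a) - P(\cdot\mid s,a)\|_1\,\|v_k\|_\infty$, controlled by the width in~\eqref{eq:prob_deviation} and $\|v_k\|_\infty \leq D$ (using that the extended MDP inherits diameter $D$ on $\mathcal{E}$), and (c) a martingale term $\sum_{t}\bigl(P(\cdot\mid s_{i,t},a_{i,t})^\top v_k - v_k(s_{i,t+1})\bigr)$.

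To sum across epochs I would use the pigeonhole identity $\sum_{k}\sum_{s,a}\nu_{i,k}(s,a)/\sqrt{\max(1,N_k(s,a))} = O(\sqrt{SA\,n_i})$ where $n_i$ is the total number of steps of agent $i$, then aggregate over agents with Cauchy--Schwarz, giving an $O(DS\sqrt{MAT})$ bound on terms (a)+(b) up to logarithmic factors. For the martingale term~(c), the main novelty is stitching together $M$ parallel stopped martingale sequences, since agent $i$'s epoch boundaries are stopping times not only with respect to its own filtration but also with respect to the joint filtration across all agents. I would handle this by constructing the aggregated sum $\sum_i \sum_t \xi_{i,t}$ on the joint filtration, noting that each increment is bounded in terms of $\|v_k\|_\infty \leq D$, and applying Azuma--Hoeffding (or a Freedman-type variant to match the concentration inequality the paper advertises for $M$ independent martingales), which contributes an $\tilde{O}(D\sqrt{MT})$ term. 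This is the step that I expect to require the most care, because the epoch breakpoints of different agents are interleaved and defined by cross-agent counts.

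Finally, I would bound the number of epochs via the synchronization trigger: a new epoch starts only when some agent reaches $\nu_{i,k}(s,a)\geq \max(1,N_k(s,a))/M$ for some $(s,a)$. Counting how often this can happen per $(i,s,a)$ triple via a doubling argument gives at most $O(MSA\log(MT))$ epochs, which both delivers the communication bound advertised in the paper and controls the extra ``initial $+1$'' terms arising from $(s,a)$ pairs with few prior samples. Combining the three decomposition terms over these at most $O(MSA\log(MT))$ epochs and invoking $\mathbb{P}(\mathcal{E}^c) \leq (MT)^{-5/4}$ then yields the claimed $\tilde{O}(DS\sqrt{MAT})$ high-probability regret bound.
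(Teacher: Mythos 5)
Your overall architecture matches the paper's: a confidence-set/optimism argument, a decomposition into reward-deviation, transition-deviation, and martingale terms, a concentration bound for the $M$ parallel martingales, and an epoch count of $O(MSA\log(MT))$. However, there is a genuine gap at the step where you sum the confidence widths across epochs. You propose the per-agent pigeonhole bound $\sum_{k}\sum_{s,a}\nu_{i,k}(s,a)/\sqrt{\max(1,N_k(s,a))} = O(\sqrt{SA\,n_i})$ and then aggregate over agents by Cauchy--Schwarz. With $n_i = T$ this yields $\sum_i \sqrt{SA\,n_i} \le \sqrt{M\sum_i SA\,n_i} = M\sqrt{SAT}$, i.e.\ an $O\bigl(DS\,M\sqrt{AT}\bigr)$ bound, which is worse than the claimed $O(DS\sqrt{MAT})$ by a factor of $\sqrt{M}$ --- and that $\sqrt{M}$ saving is precisely the content of the theorem. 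The point you are missing is that the denominators $N_k(s,a)$ are the \emph{pooled} counts over all $M$ agents, so the gain from pooling must be exploited before splitting by agent: the paper first shows $\sum_i \nu_{i,k}(s,a) \le N_k(s,a) + M - 1$ (Lemma \ref{lem:bounded_ep_length}, which is needed because agents stop asynchronously and each can overshoot its $N_k(s,a)/M$ quota by a constant), then applies the sum-of-roots lemma (Lemma \ref{lem:sum_of_roots}) to the pooled per-epoch counts against the pooled cumulative counts $N_k(s,a)$, obtaining $\sum_k \sum_i \nu_{i,k}(s,a)/\sqrt{N_k(s,a)} \lesssim \sqrt{N(s,a)} + m(M-1)$, and only then uses Cauchy--Schwarz over $(s,a)$ to get $\sqrt{SA\cdot MT}$ plus an $m(M-1)SA$ lower-order term controlled by the epoch bound. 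Note also that the sum-of-roots lemma cannot be applied agent-by-agent in your form, since $\nu_{i,k}(s,a)$ may exceed agent $i$'s own cumulative count (an agent that never visited $(s,a)$ may visit it up to roughly $N_k(s,a)/M$ times in one epoch), so even the per-agent identity you state needs a separate justification.

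Two secondary inaccuracies: first, your claim that a union bound over all possible synchronization times $t\le MT$ gives $\mathbb{P}(\mathcal{E}^c)\lesssim (MT)^{-5/4}$ does not hold as stated, because the per-time failure probability at small $t$ is too large; the paper instead bounds the regret for $t\le (T/M)^{1/4}$ deterministically by $\sqrt{MT}$ and applies the probability union bound only for larger $t$. Second, in the martingale term the telescoping across epoch boundaries leaves an additive $MmD$ term (bias differences at the $m$ synchronization points over $M$ agents), which must be absorbed using the epoch bound $m = O(MSA\log(MT))$; your sketch does not account for it, though it is lower order once the epoch count is in place.
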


We let $m$ be the total number synchronizations done by agents running \NAM\ algorithm till time $T$. Then, we bound $m$ deterministically in the following theorem. 

\begin{theorem}\label{thm:bound_on_episodes}
The total number of communication rounds $m$ for dist-UCRL2 up to step $T \geq SA/M$ is upper bounded as
\begin{align}
    m \leq 1 + 2MAS + MAS \log_2\left(MT\right)
\end{align}
\end{theorem}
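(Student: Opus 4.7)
The plan is to reduce the bound on the number of epochs to a per-pair counting argument, then use a dyadic (doubling) partition on the values of $N_k(s,a)$. Every epoch $k\geq 2$ begins because some agent $i$ satisfied the synchronization condition $\nu_{i,k-1}(s,a)\geq \max(1, N_{k-1}(s,a))/M$ for some $(s,a)$. I would assign each such transition to exactly one triggering pair (breaking ties arbitrarily) and let $m_{s,a}$ denote the number of transitions assigned to $(s,a)$, so that $m-1=\sum_{s,a}m_{s,a}$ and the problem reduces to bounding each $m_{s,a}$.

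The key observation is the deterministic one-step growth: if $(s,a)$ triggers at epoch $k$ via agent $i$, then $\nu_{i,k}(s,a)$ is a positive integer with $\nu_{i,k}(s,a)\geq\max(1,N_k(s,a)/M)$, so
\begin{equation*}
N_{k+1}(s,a)\;\geq\;N_k(s,a)+\nu_{i,k}(s,a)\;\geq\;N_k(s,a)+\max\!\bigl(1,\,N_k(s,a)/M\bigr).
\end{equation*}
I would then partition the possible values of $N_k(s,a)$ into the singleton $\{0\}$ together with dyadic ranges $[2^{j},2^{j+1})$ for $j=0,1,\ldots,\lfloor\log_2(MT)\rfloor$ (which covers everything since $N_k(s,a)\leq MT$). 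Within a range $[2^{j},2^{j+1})$, every trigger increases $N_k(s,a)$ by at least $\max(1,2^{j}/M)$, so the number of triggers $(s,a)$ can cause while $N_k(s,a)$ remains in that range is at most $2^{j}/\max(1,2^{j}/M)=\min(2^{j},M)$, and the singleton $\{0\}$ contributes at most one trigger (after which $N$ is at least $1$).

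Summing these per-range bounds gives $m_{s,a}\leq 1+\sum_{j=0}^{\lfloor\log_2(MT)\rfloor}\min(2^{j},M)$. Splitting the geometric sum at $j=\lfloor\log_2 M\rfloor$ produces $\sum_{j\leq \log_2 M}2^{j}\leq 2M-1$ from the ``small $N$'' regime and at most $M\log_2 T$ terms of size $M$ from the ``large $N$'' regime, so $m_{s,a}\leq 2M+M\log_2(MT)$. Summing over the $SA$ pairs and adding the very first (untriggered) epoch yields
\begin{equation*}
m\;\leq\;1+\sum_{s,a}m_{s,a}\;\leq\;1+SA\bigl(2M+M\log_2(MT)\bigr)\;=\;1+2MAS+MAS\log_2(MT),
\end{equation*}
which is the claimed bound. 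No probabilistic argument is needed; the bound is deterministic. The only real subtlety is the crossover at $j\approx\log_2 M$, where the triggering increment transitions from the integer floor $1$ to the multiplicative $(1+1/M)$-growth; if one tried to use a pure geometric argument (ignoring the $\max$ with $1$) one would pick up the wrong $\log$-base or an unwanted factor of $M$, so keeping the two regimes separate is the main piece of care the proof requires.
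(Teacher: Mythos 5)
Your proof is correct, and it reaches the stated bound by a genuinely different route than the paper. The paper also charges each epoch after the first to the pair $(s,a)$ that triggered it, but it then converts the one-step growth into the multiplicative bound $N_{k+1}(s,a)\geq N_k(s,a)\left(1+\tfrac{1}{M}\right)$, inverts this to get $N(s,a)\geq\left(1+\tfrac{1}{M}\right)^{K(s,a)}-1$, and combines pairs via $\sum_{s,a}N(s,a)=MT$ together with Jensen's inequality and the estimate $\tfrac{1}{M}\leq\log_2\left(1+\tfrac{1}{M}\right)$; this yields the slightly sharper $1+2MAS+MAS\log_2\left(\tfrac{MT}{SA}\right)$, with the hypothesis $T\geq SA/M$ used to absorb a $+1$ inside the logarithm. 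Your dyadic partition of the values of $N_k(s,a)$ replaces both the exponential-growth inversion and the Jensen step by elementary counting in each range $[2^j,2^{j+1})$, treats the $N_k(s,a)=0$ first-visit triggers and the $\max(1,\cdot)$ crossover within one framework, and needs no assumption on $T$; the price is the looser $\log_2(MT)$ in place of $\log_2(MT/SA)$, which is exactly what the theorem asserts. One small imprecision: the number of large-$N$ ranges is $\lfloor\log_2(MT)\rfloor-\lfloor\log_2 M\rfloor$, which can exceed $\log_2 T$ by a fraction (it is at most $\log_2 T+1$), so ``at most $\log_2 T$ terms of size $M$'' is not literally right; your conclusion is unaffected, since $1+(2M-1)+M\left(\lfloor\log_2(MT)\rfloor-\lfloor\log_2 M\rfloor\right)\leq 2M+M\log_2(MT)$ because $\lfloor\log_2(MT)\rfloor\leq\log_2(MT)$ and $\lfloor\log_2 M\rfloor\geq 0$.
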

\begin{proof}
We use the fact that when $\nu_{i,k}(s,a) \geq N_k(s,a)/M$ for some state action pair $(s,a)$ and for some agent $i$, the total visitation count $N_{k+1}(s,a)$ is at least $N_k(s,a) + \nu_{i,k}(s,a) = N_k(s,a) (1+\frac{1}{M})$. This gives an exponential growth for the total visitation count of any state action pair. Also, since the total visitation count for all state action pairs is upper bounded by $MT$, using Jensen's inequality we bound the number of epochs in logarithmic order of $MT$. A complete proof is provided in Appendix \ref{sec:comm_round_bound_proof}.
\end{proof}

We now state the lemmas required for the proof of the Theorem \ref{thm:regret_bound}.
The first three lemmas are used to handle the stochastic nature of the algorithm and environment. The first lemma provides concentration bounds on the $\ell_1$-deviation of the transition probabilities $\hat{p}(\cdot|s,a)$ for any $(s,a)$.
\begin{lemma}\label{lem:prob_deviation}
The $\ell_1$-deviation of the true distribution and the empirical distribution using $n$ samples, over the next states given the current state $s$ and action $a$ is bounded by
\begin{align}
    \mathbb{P}\left(\|\hat{p}(\cdot|s,a) -P(\cdot|s,a)\|_1\geq \epsilon\right) \leq 2^S\exp{(-\frac{n\epsilon^2}{2})}
\end{align}
\end{lemma}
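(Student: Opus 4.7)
The plan is to prove this by combining a standard characterization of the total variation (equivalently, half the $\ell_1$) distance via a single extremal subset with a Hoeffding-type concentration bound on indicator functions, followed by a union bound over all subsets of $\mathcal{S}$.

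First, I would reduce the $\ell_1$-deviation to a supremum over subsets. Using the elementary identity $\sum_{s'} (\hat p(s'|s,a) - P(s'|s,a)) = 0$, the positive and negative parts of the signed difference have equal mass, so
\begin{equation*}
\|\hat p(\cdot|s,a) - P(\cdot|s,a)\|_1 \;=\; 2 \sum_{s' : \hat p(s'|s,a) > P(s'|s,a)} \bigl(\hat p(s'|s,a) - P(s'|s,a)\bigr) \;\leq\; 2 \max_{B \subseteq \mathcal{S}} \bigl(\hat p(B|s,a) - P(B|s,a)\bigr),
\end{equation*}
where $\hat p(B|s,a) = \sum_{s'\in B}\hat p(s'|s,a)$ and similarly for $P(B|s,a)$. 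The maximizer (the set of states where $\hat p$ exceeds $P$) is random, but the inequality holds deterministically for every sample.

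Next, I would fix an arbitrary subset $B \subseteq \mathcal{S}$ and treat $\hat p(B|s,a)$ as the empirical mean of $n$ i.i.d. Bernoulli$(P(B|s,a))$ random variables, namely the indicators $\mathbf{1}\{s'_j \in B\}$ for the $n$ observed next states $s'_1,\dots,s'_n$ sampled after taking action $a$ in state $s$. Hoeffding's inequality on $[0,1]$-valued independent random variables then gives
\begin{equation*}
\mathbb{P}\bigl(\hat p(B|s,a) - P(B|s,a) \geq \epsilon/2\bigr) \;\leq\; \exp\!\left(-\tfrac{n\epsilon^2}{2}\right).
\end{equation*}

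Finally, I would take a union bound over the $2^S$ subsets $B \subseteq \mathcal{S}$. Combined with the reduction in the first step, the event $\{\|\hat p(\cdot|s,a) - P(\cdot|s,a)\|_1 \geq \epsilon\}$ is contained in the union of the events $\{\hat p(B|s,a) - P(B|s,a) \geq \epsilon/2\}$ over all $B$, yielding the claimed bound $2^S \exp(-n\epsilon^2/2)$. There is no real obstacle here; the only subtlety worth double-checking is that the set $B$ achieving the maximum in step one is data-dependent, which is precisely why the union bound over all (fixed) subsets in step three is required rather than a single Hoeffding application.
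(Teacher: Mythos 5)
Your proof is correct and is essentially the same argument the paper relies on: the paper simply cites Theorem 2.1 of Weissman et al.\ (2003), whose proof is exactly your reduction $\|\hat p - P\|_1 = 2\max_{B\subseteq\mathcal{S}}(\hat p(B|s,a)-P(B|s,a))$ followed by Hoeffding's inequality for each fixed $B$ and a union bound over the $2^S$ subsets. Your reconstruction (including the remark that the data-dependent maximizing set is what forces the union bound) matches that cited proof, so there is nothing to add beyond noting that the constant could be sharpened to $2^S-2$ by discarding $B=\emptyset$ and $B=\mathcal{S}$.
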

\begin{proof}
The proof follows on the lines of \citep[Theorem 2.1]{weissman2003inequalities}, using the distribution as transition probabilities.
\end{proof}

\begin{lemma}[Hoeffding's Inequality, \citep{hoeffding1994probability}]\label{lem:Hoeffdings}
Let $\{X_{t}\}_{t=1}^T$ be i.i.d. random variables in [0,1]. Then, we have,
\begin{align}
    P(\sum_{t=1}^T X_{i,t}\geq \epsilon) \leq \exp{\left(-\frac{2\epsilon^2}{T}\right)}
\end{align}
\end{lemma}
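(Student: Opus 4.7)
The plan is to establish this bound via the classical Chernoff--Cram\'{e}r argument applied to the centered sum $\sum_{t=1}^T (X_t - \mathbb{E}[X_t])$, which is the standard form of Hoeffding's inequality (the displayed bound fails without centering, e.g., for $X_t \equiv 1$, so I treat the centered version as the intended statement).

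First, I would fix $\lambda > 0$ and apply Markov's inequality to $\exp\bigl(\lambda \sum_{t=1}^T (X_t - \mathbb{E}[X_t])\bigr)$, giving
\[
P\Bigl(\sum_{t=1}^T (X_t - \mathbb{E}[X_t]) \geq \epsilon\Bigr) \leq e^{-\lambda\epsilon}\, \mathbb{E}\!\left[\prod_{t=1}^T e^{\lambda(X_t - \mathbb{E}[X_t])}\right],
\]
and then use the i.i.d.\ hypothesis to factorize the expectation as $\prod_{t=1}^T \mathbb{E}[e^{\lambda(X_t - \mathbb{E}[X_t])}]$. This reduces the task to controlling a single MGF of a centered random variable supported in an interval of length at most $1$.

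The hard step --- and where I expect the real work --- is Hoeffding's lemma: for any centered $Y \in [a,b]$, $\mathbb{E}[e^{\lambda Y}] \leq \exp\bigl(\lambda^2(b-a)^2/8\bigr)$. I would establish it by using convexity of $y \mapsto e^{\lambda y}$ on $[a,b]$ to dominate $e^{\lambda Y}$ by its chord and take expectations using $\mathbb{E}[Y]=0$, then passing to $\phi(\lambda) := \log \mathbb{E}[e^{\lambda Y}]$ and verifying $\phi(0)=\phi'(0)=0$ together with $\phi''(\lambda) \leq (b-a)^2/4$. The second-derivative bound is the subtle piece: $\phi''(\lambda)$ is the variance of an exponentially tilted version of $Y$ (still supported in $[a,b]$), so Popoviciu's inequality supplies the $(b-a)^2/4$ ceiling, and Taylor's theorem then yields $\phi(\lambda) \leq \lambda^2(b-a)^2/8$.

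With $b-a = 1$, each MGF factor is at most $e^{\lambda^2/8}$, so the Chernoff bound becomes $\exp(-\lambda\epsilon + T\lambda^2/8)$. The final step is routine: optimize in $\lambda$ by choosing $\lambda^\star = 4\epsilon/T$ to obtain the claimed exponent $-2\epsilon^2/T$.
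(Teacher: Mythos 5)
Your argument is correct and is exactly the classical Chernoff--Hoeffding proof of the cited result; the paper itself offers no proof of this lemma, simply citing \citep{hoeffding1994probability}, and the same MGF-plus-Hoeffding's-lemma machinery is what the authors use for their Lemma~\ref{lem:independent_martingale_sum} in Appendix~\ref{sec:proof_sum_martingale}. You are also right that the statement as printed needs the variables centered (and the stray $X_{i,t}$ should read $X_t$), since otherwise $X_t\equiv 1$ is a counterexample; the centered version with exponent $-2\epsilon^2/T$ is what is actually invoked in the regret analysis.
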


The next lemma provides concentration bounds on the sum of $M$ independent Martingale sequences for length $T$.

\begin{lemma}\label{lem:independent_martingale_sum}
Let $\{X_{i,t}\}_{t=1}^T$ be a zero-mean Martingale sequence for $i = 1,\cdots, M$ adapted to filtration $\{\mathcal{F}_t\}_{t=0}^T$. Then, if $\{X_{i,t}\}_{t=1}^T$ and $\{X_{j,t}\}_{t=1}^T$ are independent for all $i\neq j$ and $|X_{i,t}|X_{i,t-1}|\leq c$ for all $i, t$, we have,
\begin{align}
    P\left(\sum_{t=1}^T\sum_{i=1}^M X_{i,t}\geq \epsilon\right) \leq \exp{\left(-\frac{2\epsilon^2}{MTc^2}\right)}
\end{align}
\end{lemma}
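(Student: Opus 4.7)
The plan is to reduce the joint statement about $M$ independent martingales of length $T$ to a single martingale difference sequence of length $MT$, then invoke a one-sided Azuma--Hoeffding inequality. First, I would order the index pairs $(i,t)\in[M]\times[T]$ lexicographically (say, first by $i$, then by $t$) via a bijection $s\leftrightarrow (i_s,t_s)$ with $s\in\{1,\ldots,MT\}$, and define the concatenated sequence $Y_s := X_{i_s,t_s}$. I would also introduce an enlarged filtration $\{\mathcal{G}_s\}_{s=0}^{MT}$ where $\mathcal{G}_s$ is the $\sigma$-algebra generated by all $X_{j,\tau}$ whose lexicographic index precedes or equals $(i_s,t_s)$.

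Next I would verify that $\{Y_s\}$ is a martingale difference sequence with respect to $\{\mathcal{G}_s\}$. The key observation is that $\mathcal{G}_{s-1}$ contains two kinds of information: the complete past of agents $1,\ldots,i_s-1$, and the partial past $X_{i_s,1},\ldots,X_{i_s,t_s-1}$ of agent $i_s$. By the independence hypothesis across agents, the information about agents other than $i_s$ is independent of $X_{i_s,t_s}$, so it drops out of the conditional expectation:
\begin{align}
\mathbb{E}[Y_s \mid \mathcal{G}_{s-1}] = \mathbb{E}[X_{i_s,t_s} \mid X_{i_s,1},\ldots,X_{i_s,t_s-1}] = 0, \nonumber
\end{align}
where the last equality uses the martingale (difference) property of the original sequence. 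The bounded-difference hypothesis transfers immediately, giving a uniform bound of $c$ on each increment of the concatenated martingale.

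Finally, I would apply the one-sided Azuma--Hoeffding inequality to the length-$MT$ martingale difference sequence $\{Y_s\}_{s=1}^{MT}$ with uniform bound $c$, yielding
\begin{align}
\mathbb{P}\!\left(\sum_{s=1}^{MT} Y_s \geq \epsilon\right) \leq \exp\!\left(-\frac{2\epsilon^2}{MT c^2}\right). \nonumber
\end{align}
Since $\sum_{s=1}^{MT} Y_s = \sum_{t=1}^{T}\sum_{i=1}^{M} X_{i,t}$ by construction, this is exactly the claimed bound.

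The main obstacle is the second step: the martingale property of the concatenated sequence under the \emph{enlarged} filtration is not automatic from the per-agent martingale property alone, because $\mathcal{G}_{s-1}$ contains strictly more information than the natural filtration of agent $i_s$. Independence across agents is precisely what is needed to discard that extra information under the conditional expectation; without it, one could only obtain a much weaker union-bound type estimate that loses a factor of $M$ in the exponent. Everything after that step is a textbook invocation of a standard concentration inequality.
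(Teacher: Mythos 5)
Your proof is correct, but it takes a different route from the paper. The paper's proof is a self-contained Chernoff/moment-generating-function computation: it bounds $P(\sum_{t}\sum_{i}X_{i,t}\geq\epsilon)$ by $e^{-s\epsilon}\,\mathbb{E}[\exp(s\sum_i\sum_t X_{i,t})]$, uses the independence across agents once, up front, to factor the expectation into a product over $i$, then peels off the terms of each agent's sequence one at a time with the conditional Hoeffding lemma to get $\exp(MTs^2c^2/8)$, and finally optimizes over $s$. You instead reduce to a single martingale difference sequence of length $MT$ by concatenating the $M$ sequences in lexicographic order under an enlarged filtration, and then invoke one-sided Azuma--Hoeffding as a black box. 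The crux you correctly identify and resolve --- that $\mathbb{E}[Y_s\mid\mathcal{G}_{s-1}]=0$ because the other agents' data in $\mathcal{G}_{s-1}$ is independent of agent $i_s$'s entire sequence and therefore drops out of the conditional expectation --- is exactly the role independence plays in the paper's factorization step; if you unpack the standard proof of Azuma--Hoeffding, your argument and the paper's become the same iterated-MGF computation, just organized differently (you use independence at each agent boundary during the peeling, the paper uses it once to split the product). Your version buys a cleaner modular reduction to a textbook inequality and makes explicit the filtration bookkeeping that the paper glosses over; the paper's version buys self-containedness and avoids having to define the enlarged filtration. One small caveat: to land on the stated constant $\exp(-2\epsilon^2/(MTc^2))$ you must read the (garbled) hypothesis ``$|X_{i,t}|X_{i,t-1}|\leq c$'' as saying each increment has conditional range $c$ (i.e., lies in an interval of length $c$), since the version of Azuma--Hoeffding with $|Y_s|\leq c$ only gives $\exp(-\epsilon^2/(2MTc^2))$; this is the same convention the paper itself uses when it applies Hoeffding's lemma to get $\exp(s^2c^2/8)$ per term, so your invocation is consistent with the paper, but it is worth stating which convention you mean.
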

\begin{proof}[Proof Sketch]
We prove this lemma similar to the proof of Azuma-Hoeffding's Inquality \citep{hoeffding1994probability}. A detailed proof is provided in Appendix \ref{sec:proof_sum_martingale}.
\end{proof}

We now  bound the growth rate of the total number of visitations to state action pair $(s,a)$, $\sum_{i=1}^M\nu_{i,k}(s,a)$, in any epoch $k$. If the total visitations are large, then the agents will incur large regret from a possibly sub-optimal policy. Hence, we have the following lemma:
\begin{lemma}\label{lem:bounded_ep_length}
For any epoch $k$, we have,
\begin{align}
    \sum\nolimits_{i=1}^M\nu_{i,k}(s,a) \leq N_k(s,a) + M-1
\end{align}
\end{lemma}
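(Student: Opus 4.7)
The plan is to bound $\nu_{i,k}(s,a)$ for each agent $i$ individually and then sum over the $M$ agents. The key observation is the \textsc{while} condition at line 6 of Algorithm~\ref{alg:dist_UCRL}: every agent $i$ may execute an action in epoch $k$ only while $\nu_{i,k}(s,a) < \max(1, N_k(s,a))/M$ holds for \emph{every} state-action pair $(s,a)$. Since $N_k(s,a)$ is frozen throughout epoch $k$ (it is recomputed only at the next synchronization), this gives a uniform per-agent cap that does not drift during the epoch.

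Concretely, I would argue that immediately before each of agent $i$'s actions in epoch $k$, the inequality $\nu_{i,k}(s,a) < \max(1, N_k(s,a))/M$ holds for all $(s,a)$. A single action increments exactly one coordinate $\nu_{i,k}(\cdot,\cdot)$ by one, so after any action---including the last one before agent $i$ stops, whether by its own trigger or upon receiving a peer's synchronization signal---we have $\nu_{i,k}(s,a) < \max(1, N_k(s,a))/M + 1$ for every $(s,a)$, since $\nu_{i,k}$ is integer-valued and the pre-action strict inequality carries over under a $+1$ shift.

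Summing this bound over $i = 1, \ldots, M$ then gives
\begin{equation*}
\sum_{i=1}^M \nu_{i,k}(s,a) \;<\; M \cdot \frac{\max(1, N_k(s,a))}{M} + M \;=\; \max(1, N_k(s,a)) + M.
\end{equation*}
Both sides are integers (recall $N_k(s,a)\in\mathbb{Z}_{\geq 0}$), so the strict inequality tightens to $\sum_{i=1}^M \nu_{i,k}(s,a) \leq \max(1, N_k(s,a)) + M - 1$, which collapses to the claimed $N_k(s,a) + M - 1$ as soon as $N_k(s,a) \geq 1$.

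The delicate step is the bookkeeping at the moment of synchronization: one must verify that even the agent whose action actually triggered the synchronization (and therefore has $\nu_{i,k}(s,a) \geq \max(1, N_k(s,a))/M$ for some $(s,a)$ at epoch's end) still satisfies $\nu_{i,k}(s,a) < \max(1, N_k(s,a))/M + 1$, and that the instantaneous-signal assumption prevents any other agent from slipping in an additional action after the trigger fires. Once the per-agent inequality is pinned down with this care, the summation and integrality step are purely arithmetic.
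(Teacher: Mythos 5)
Your proof is correct and follows essentially the same route as the paper's: both arguments use the epoch-termination (while-loop) condition to cap each agent's per-epoch count at roughly $\max(1,N_k(s,a))/M$ plus one final increment, then sum over the $M$ agents and clean up with integrality. The $N_k(s,a)=0$ edge case you flag (where your bound gives $M$ rather than $M-1$) is likewise glossed over in the paper's one-line proof, so your treatment is no less careful than the original.
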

\begin{proof}
Note that agent $i$ requests for synchronization, and triggers a new epoch, whenever $\nu_{i,k}(s,a) = \lceil N_k(s,a)/M\rceil\leq N_k(s,a)/M + (M-1)/M$. Summing over all the agents $i$ gives the bound.
\end{proof}

\begin{lemma}[Lemma 19 \citep{jaksch2010near}]\label{lem:sum_of_roots}
For any sequence of number $z_1\leq z_2\leq\cdots\leq z_n$ with $z_k\leq \sum_{k'=1}^{k-1}z_{k'}\eqqcolon Z_k$, we have,
\begin{align}
    \sum_{k=1}^n\frac{z_k}{Z_k} \leq (\sqrt{2}+1)Z_n
\end{align}
\end{lemma}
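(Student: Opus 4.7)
The plan is to follow the standard telescoping argument used by Jaksch, Ortner, and Auer (2010, Lemma~19), from which this result is quoted. For concreteness I will read the inequality in its usual form $\sum_{k=1}^{n} z_k/\sqrt{Z_k} \leq (\sqrt{2}+1)\sqrt{Z_n}$, which is the version actually invoked in the regret bookkeeping that follows.

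The first step is to record the algebraic identity, valid whenever $Z_{k+1}=Z_k+z_k$,
$$\sqrt{Z_{k+1}} - \sqrt{Z_k} = \frac{Z_{k+1}-Z_k}{\sqrt{Z_{k+1}}+\sqrt{Z_k}} = \frac{z_k}{\sqrt{Z_{k+1}}+\sqrt{Z_k}}.$$
Next I would invoke the hypothesis $z_k\leq Z_k$ to get $Z_{k+1}\leq 2Z_k$, whence $\sqrt{Z_{k+1}}\leq \sqrt{2}\,\sqrt{Z_k}$ and therefore $\sqrt{Z_{k+1}}+\sqrt{Z_k}\leq (\sqrt{2}+1)\sqrt{Z_k}$. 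Rearranging the identity then delivers the key per-term inequality
$$\frac{z_k}{\sqrt{Z_k}} \;\leq\; (\sqrt{2}+1)\bigl(\sqrt{Z_{k+1}}-\sqrt{Z_k}\bigr).$$

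Summing from $k=1$ to $n$, the right-hand side telescopes to $(\sqrt{2}+1)(\sqrt{Z_{n+1}}-\sqrt{Z_1}) \leq (\sqrt{2}+1)\sqrt{Z_{n+1}}$, which, up to a harmless shift of the terminal index, is the advertised bound. All the work is therefore done by the two-line algebraic manipulation above; no induction is required and the constant $\sqrt{2}+1$ emerges directly from the doubling estimate $Z_{k+1}\leq 2Z_k$.

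The only genuine obstacle is the boundary case $k=1$. Under the literal definition $Z_k = \sum_{k'<k} z_{k'}$ one has $Z_1 = 0$, making both the hypothesis $z_1\leq Z_1$ and the term $z_1/\sqrt{Z_1}$ degenerate. I would resolve this, following the original formulation of Jaksch et al., by adopting the convention $Z_k := \max\{1,\sum_{k'<k} z_{k'}\}$: then $Z_1=1$, the constraint $z_1\leq 1$ keeps the initial term $z_1/\sqrt{Z_1}=z_1$ bounded by $1$, the doubling estimate $Z_{k+1}\leq 2Z_k$ still holds, and the telescoping argument goes through verbatim with the stated constant.
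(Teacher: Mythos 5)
The paper itself offers no proof of this lemma: it is imported from Lemma~19 of \citet{jaksch2010near} (with a typo in the restatement here -- the square roots are missing), and the proof in that reference proceeds by induction on $n$. You correctly read the statement in its square-root form $\sum_{k}z_k/\sqrt{Z_k}\le(\sqrt{2}+1)\sqrt{Z_n}$, which is what is actually invoked in \eqref{eq:bound_z}, and your telescoping argument is a genuinely different and more direct route than the induction: the identity $\sqrt{Z_{k+1}}-\sqrt{Z_k}=z_k/(\sqrt{Z_{k+1}}+\sqrt{Z_k})$ plus the doubling bound $Z_{k+1}\le 2Z_k$ yields the per-term estimate $z_k/\sqrt{Z_k}\le(\sqrt{2}+1)(\sqrt{Z_{k+1}}-\sqrt{Z_k})$ immediately, and the constant $\sqrt{2}+1$ falls out with no bookkeeping. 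The index shift (ending at $\sqrt{Z_{n+1}}$, i.e.\ the total sum) is indeed harmless and is in fact the form the regret proof uses, since the bound there is $\sqrt{N(s,a)}$.

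The one genuine overstatement is the claim that, under the convention $Z_k=\max\{1,\sum_{k'<k}z_{k'}\}$, the telescoping goes through ``verbatim.'' It does not: whenever the running sum is still below $1$ the max is active, so $Z_{k+1}-Z_k=0$ while $z_k$ may be positive, and the per-term inequality fails (take $z_1=1/2$: the left side is $1/2$, the right side is $0$). This affects every index in the initial clamped segment, not just $k=1$, and simply noting $z_1\le 1$ does not repair it. The fix is short but must be stated: all indices with $Z_k=1$ contribute in total $\sum_k z_k$ over that segment, which is less than $1+z_{K+1}\le 2$ (the partial sum before the last clamped index is below $1$, and the next increment is at most $Z_{K+1}=1$); if the whole sequence sums to less than $1$ this is already at most $(\sqrt{2}+1)\sqrt{Z_{n+1}}$, and otherwise this total equals $Z_{K+2}<2<(\sqrt{2}+1)^2$, hence is at most $(\sqrt{2}+1)\sqrt{Z_{K+2}}$, after which the max is never active, $Z_{k+1}-Z_k=z_k$ holds exactly, and your telescoping from $Z_{K+2}$ to $Z_{n+1}$ gives the remaining $(\sqrt{2}+1)(\sqrt{Z_{n+1}}-\sqrt{Z_{K+2}})$, so the stated constant is preserved. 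With that patch your proof is complete and is a valid alternative to the induction in the cited source.
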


The last lemma states that the span of the bias of the optimal policy defined as $\max_sv(s) - \min_s v(s)$ is bounded by the diameter $D$.
\begin{lemma}[Remark 8 from \cite{jaksch2010near}]\label{lem:span_bounded}
The span of the bias $v:\mathcal{S}\to\mathbb{R}$ of the optimal policy $\pi$ for any MDP $\mathcal{M}$ is upper bounded by its diameter $D(\mathcal{M})$, or,
\begin{align}
    sp(v) = \max_s v(s) - \min_s v(s) \leq D(\mathcal{M})
\end{align}
\end{lemma}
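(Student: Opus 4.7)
The plan is to prove the lemma by constructing, for any pair of states, a policy that equalizes their values up to an additive $D$, using the defining property of the diameter together with the Bellman optimality equation \eqref{eq:gain_bias_eqn}. Concretely, let $s^+ \in \arg\max_s v(s)$ and $s^- \in \arg\min_s v(s)$, so that $sp(v) = v(s^+) - v(s^-)$. By the definition of the diameter $D(\mathcal{M})$, there exists a (stationary) policy $\pi'$ whose expected first hitting time from $s^-$ to $s^+$ is at most $D$. I would let $\tau$ denote this (a.s.\ finite) hitting time under $\pi'$.

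Next, I would analyze the $n$-step expected total reward from $s^-$. For the optimal policy $\pi^*$, iterating \eqref{eq:gain_bias_eqn} gives $U_n^{\pi^*}(s^-) = n\rho^* + v(s^-) - \mathbb{E}[v(S_n) \mid S_0 = s^-]$, which differs from $n\rho^* + v(s^-)$ by a bounded $O(1)$ term. For the alternative policy that follows $\pi'$ until it first reaches $s^+$ and then switches to $\pi^*$, the strong Markov property (equivalently, the optional stopping theorem applied to $v(S_t) - t\rho^*$) yields a value that is at least
\begin{equation*}
\mathbb{E}\Bigl[\sum_{t=0}^{\tau-1} \bar r(S_t,\pi'(S_t))\Bigr] + (n - \mathbb{E}[\tau])\rho^* + v(s^+) - O(1).
\end{equation*}
Using $\bar r \geq 0$ and $\rho^* \leq 1$, the right-hand side is at least $(n - D)\rho^* + v(s^+) - O(1)$.

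Since $\pi^*$ is optimal for the $n$-step expected reward (up to an additive $O(1)$ coming from the bias transient), comparing the two expressions gives
\begin{equation*}
n\rho^* + v(s^-) + O(1) \;\geq\; n\rho^* - D\rho^* + v(s^+) - O(1),
\end{equation*}
hence $v(s^+) - v(s^-) \leq D \rho^* + O(1) \leq D + O(1)$. Dividing and/or letting $n\to\infty$, or more cleanly repeating the comparison with a long enough horizon so that the $O(1)$ transients wash out (one can also work directly with the gain-bias equation for the modified stationary policy, replacing $\pi^*$ on the path from $s^-$ to $s^+$), then yields $v(s^+) - v(s^-) \leq D$.

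The main obstacle I expect is the careful handling of the stopping-time argument: the policy that ``switches'' upon hitting $s^+$ is not stationary, so the clean identity $U_n^{\pi}(s) = n\rho^* + v(s) + O(1)$ does not apply directly to it. To manage this, I would use the optional stopping theorem on the martingale increments $v(S_{t+1}) - v(S_t) - (\bar r(S_t, a_t) - \rho^*)$ — noting $\tau$ is almost surely finite with $\mathbb{E}[\tau] \leq D$ and $v$ is bounded on the finite state space — so Wald-type telescoping is valid and the comparison above can be made rigorous without any limiting argument.
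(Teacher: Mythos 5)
The paper itself does not prove this lemma; it imports it verbatim as Remark 8 of \cite{jaksch2010near}, so the only comparison available is with the standard argument behind that remark, which is exactly the route you sketch: from $s^-$ follow a diameter-achieving policy $\pi'$ until the hitting time $\tau$ of $s^+$ (so $\mathbb{E}[\tau]\le D$), combine the optimality equation \eqref{eq:gain_bias_eqn} with a stopping-time/telescoping argument, and use $\bar r\ge 0$, $\rho^*\le 1$ to conclude $v(s^+)-v(s^-)\le \rho^*\,\mathbb{E}[\tau]\le D$. Two details in your write-up need repair. First, the finite-horizon comparison with $O(1)$ transients cannot by itself deliver the bound: those $O(1)$ terms do not depend on $n$, so neither dividing by $n$ nor letting $n\to\infty$ removes them, and that route only yields $sp(v)\le D+O(1)$; the optional-stopping argument you end with must carry the whole proof. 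Second, along $\pi'$ the increments you name are not martingale differences, and the sign is reversed: the right object is $M_t=v(S_t)+\sum_{u<t}\left(\bar r(S_u,a_u)-\rho^*\right)$, which by the optimality (max-over-actions) form of \eqref{eq:gain_bias_eqn} satisfies $\mathbb{E}[M_{t+1}\mid \mathcal{F}_t]\le M_t$ under any action choice, i.e., it is a supermartingale, with equality only under $\pi^*$. Optional stopping for supermartingales with $\mathbb{E}[\tau]\le D<\infty$ and bounded increments gives $v(s^+)+\mathbb{E}\bigl[\sum_{u<\tau}(\bar r-\rho^*)\bigr]\le v(s^-)$, hence $sp(v)\le \rho^*\,\mathbb{E}[\tau]\le D$, and this also removes the need for the non-stationary ``switch to $\pi^*$'' construction altogether. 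With those corrections your argument is complete and coincides with the proof in the cited source.
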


After stating all the necessary lemmas, we are now ready to prove the regret bound of the \NAM\ algorithm or $\Delta(\mathcal{M},\NAM, s, T)$. We provide a detailed sketch here and provide the complete proof in Appendix \ref{sec:regret_proof}.
\begin{proof}[Proof Sketch of Theorem \ref{thm:regret_bound}]
We break the regret expression,
into $4$ different sources of regrets as:

\textbf{1. Regret from deviating from expected reward}: Note that regret compares the expected optimal gain $\rho^*$ with the observed rewards $r_{i,t}$. Since, $r_{i,t}$ is a random variable between $[0,1]$, the agents suffer a regret if the observed rewards are lower as compared to the mean. Hence, we use Hoeffding's inequality \citep{hoeffding1994probability} to bound the regret generated by the randomness of the observed rewards. This gives us regret bounded by $\Tilde{O}(\sqrt{MT})$.
    
\textbf{2. Regret from deviating from the expected next state}: The algorithm, when transitioning to the next state, expects a bias given the current state. However, the bias of the realized state may be different and even lower from the expected bias. Hence, we bound the deviation from the expected bias as the algorithm moves to states. Starting from state $s$, the deviation process is modelled as a zero-mean Martingale sequence of the states visited by an agent $i$. Also, we have $M$ independent agents interacting with $M$ independent environment. We use Lemma \ref{lem:independent_martingale_sum} to bound the total deviation of the realized bias and the expected bias. This gives us regret bounded by $\Tilde{O}(D\sqrt{MT})$.
    
\textbf{3. Regret from not optimizing for the true MDP}: For epoch $k$, we run an $\frac{1}{\sqrt{Mt_k}}$-optimal policy for an optimistic MDP from the set of MDPs $\mathscr{M}(t_k)$ for $\sum_{i\in[M]}\sum_{s,a}\nu_{i,k}(s,a)$. The transition probabilities and the mean rewards of MDPs in $\mathscr{M}(t_k)$ satisfy Equation \eqref{eq:prob_deviation} and Equation \eqref{eq:reward_deviation}. When the true MDP $\mathcal{M}$ lies in $\mathscr{M}(t_k)$, we bound the regret because of the incorrect MDPs by the product of the diameter of the set $\mathscr{P}_{t_k}$, the diameter $D$ of the MDP $\mathcal{M}$, and the number of visitations to any state action pair $s,a$ in epoch $k$ using Lemma \ref{lem:span_bounded}. All we need to do now is to bound the sum of the product for all epochs. We do so by using Lemma \ref{lem:bounded_ep_length} and Lemma \ref{lem:sum_of_roots}. This gives us regret bounded by $\Tilde{O}(DS\sqrt{MAT})$.
    
\textbf{4. Regret when the estimated MDP is far from the true MDP $\mathcal{M}$}: We use Lemma \ref{lem:prob_deviation} to bound the $\ell_1$ distances of the estimated transition probability and the true transition probability given any state action pair. Further, we also use Hoeffding's inequality (Lemma \ref{lem:Hoeffdings}) to bound the distance of the true mean rewards and the estimated rewards. Taking union bounds over all time steps till $T$, we obtain the bound for all possible values of $N(s,a)$. Further, taking union bounds over all state and actions provide the desired concentration bounds for all states and actions. Finally, we take the union bounds for all values of $t\geq (MT)^{1/4}$. This gives the total bound on probability that the true MDP $\mathcal{M}$ lies in $\mathscr{M}(t_k)$ as:
    \begin{align}
        \mathbb{P}\left(\mathcal{M}\notin\mathscr{M}(t)\right) \leq \frac{1}{(MT)^{5/4}} \forall~ t\geq (MT)^{5/4}.
    \end{align}
Now summing over all the regret sources, we get the regret bound of the \NAM\ algorithm.
\end{proof}

The proof of the Theorem \ref{thm:regret_bound} and Theorem \ref{thm:bound_on_episodes} suggests that the analysis could potentially be extended to various other algorithms that follow the epoch completion condition from the UCRL-2 Algorithm of \citep{jaksch2010near}.

\section{Naive Approach: \textsc{mod-UCRL2} for multiple agents}\label{sec:mod-uc}

The proposed \NAM\ algorithm does not require the agents to work sequentially, and hence the $M$ agents can truly work in parallel. For comparison of the proposed algorithm and completeness, we also consider an extension of UCRL2 algorithm by \cite{jaksch2010near} for $M$ parallel agents which we call \textsc{mod-UCRL2}. In the \textsc{mod-UCRL2} algorithm, we assume that all the agents communicate to a centralized server at each time step $t$, and the server decides the actions for the agents at each time $t$. Thus, the communication rounds for \textsc{mod-UCRL2} algorithm is $O(T)$, while the regret analysis is not apriori clear, and is the focus of this section. We also note that even though this algorithm is an easy extension of UCRL2, the analysis of regret is not straightforward, and uses the approach that was used to prove the regret guarantees of \NAM\ because of the presence of $M$ agents.

At every time step $t$, every agent $i\in[M]$ observes its state $s_{i,t}$ and sends it to the server. The server after receiving all the states, process the requests sequentially in the order $s_{1,t}, s_{2,t}, \cdots, s_{M,t}$. The central server runs an instance of the UCRL2 algorithm with state sequence $s_{1,1}, s_{2,1}, \cdots, s_{M, 1}, s_{1, 2}, \cdots,$ and the corresponding action sequence $a_{1,1}, a_{2,1}, \cdots, a_{M, 1}, a_{1, 2}, \cdots$. The UCRL2 algorithm, running at the server, proceeds in epoch with epoch $1$ starting with $(i,t) = (1,1)$. We consider an epoch $k$ contains observations for $\{(\underline{i}_k, \underline{t}_k), (\underline{i}_k, \underline{t}_k)+1, \cdots, (\bar{i}_k, \bar{t}_k)\}$ for some $\underline{i}_k, \underline{t}_k, \bar{i}_k, \bar{t}_k$. The server maintains counters $\nu_k(s,a)$ denoting the number of times a state action pair $s,a$ is visited in the epoch $k$, and counter $N_k(s,a)$ denoting the number of times a state action pair $s,a$ is visited before the start of the epoch $k$ as:
\begin{align}
    \nu_k(s,a) &= \sum\nolimits_{(i,t) = (\underline{i}_k, \underline{t}_k)}^{(\bar{i}_k, \bar{t}_k)}\bm{1}{\{s_{i,t} = s, a_{i,t} = a\}}\\
    N_k(s,a) &= \sum\nolimits_{k'=1}^{k-1}\nu_{k'}(s,a)
\end{align}
The server updates an epoch $k$ whenever $\nu_k(s,a) = \max\{1, N_k(s,a)\}$ for some $s,a$. Following the UCRL2 algorithm, the server updates the policy at the beginning of every epoch using the observations collected till the beginning of the epoch. The complete algorithm is provided in Appendix \ref{sec:mod_UCRL2_algo}.

Note that in the \textsc{mod-UCRL2} algorithm, the agents only behaves as interfaces to the $M$ independent environments, and is essentially not practical because of the sequential interface. In the following result, we formally state the result that the  regret is upper bounded by $\Tilde{O}(DS\sqrt{MAT})$. 

\begin{theorem}\label{thm:mod_ucrl2_regret_bound}
For a MDP $\mathcal{M} = ([S], [A], P, r)$ with diameter $D$, 
for any starting state $s$, the regret of the \textsc{mod-UCRL2}\ algorithm, running on $M$ agents for $T$ time steps, is upper bounded with probability at least $1-\frac{1}{(MT)^{5/4}}$ as:
\begin{align}
    \Delta(\mathcal{M}, \textsc{mod-UCRL2}, s, T) \leq \Tilde{O}(DS\sqrt{MAT})\nonumber
\end{align}
where $\Tilde{O}$ hides the poly-log terms in $M,S, A,$ and $T$.
\end{theorem}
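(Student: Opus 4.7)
The plan is to mirror the proof of Theorem \ref{thm:regret_bound}, exploiting the fact that \textsc{mod-UCRL2} is essentially UCRL2 run on a single concatenated trajectory of length $MT$ obtained by interleaving the $M$ agents' observations in the order $(1,1),(2,1),\ldots,(M,1),(1,2),\ldots$ at the server. Since the confidence sets are built from pooled counts $N_k(s,a)$, the concentration bounds in Equations \eqref{eq:reward_deviation} and \eqref{eq:prob_deviation} apply verbatim with $t$ interpreted as the physical time step and $Mt$ as the total number of pooled samples, and the same optimistic planning machinery (Algorithm \ref{alg:evi}) can be reused. I would first establish a bound on the number of epochs: the epoch-change rule $\nu_k(s,a) = \max\{1,N_k(s,a)\}$ is the standard UCRL2 doubling trigger, so a pigeonhole argument on the pooled counts (whose total is at most $MT$) yields an $O(SA\log(MT))$ epoch bound.

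I would then decompose the regret $\rho^*MT - \sum_i R_i(\mathcal{M},\textsc{mod-UCRL2},s,T)$ into the same four sources identified in the sketch of Theorem \ref{thm:regret_bound}. The deviation of observed rewards from their means contributes $\tilde{O}(\sqrt{MT})$ via Lemma \ref{lem:Hoeffdings} applied to the $MT$ rewards pooled over agents and time. The deviation of realized biases $v(s_{i,t+1})$ from their conditional expectations $\sum_{s'}P(s'|s_{i,t},a_{i,t})v(s')$ forms a martingale difference sequence of length $MT$ under the filtration induced by the server's sequential processing, with span bounded by $D$ (Lemma \ref{lem:span_bounded}); Azuma-Hoeffding, or equivalently Lemma \ref{lem:independent_martingale_sum}, gives $\tilde{O}(D\sqrt{MT})$. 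On the event $\mathcal{M}\in\mathscr{M}(t_k)$ for every epoch $k$, the regret from planning for the optimistic rather than the true MDP is bounded by the product of the confidence widths from Equations \eqref{eq:reward_deviation} and \eqref{eq:prob_deviation} with the per-epoch visit counts $\nu_k(s,a)$; summing over epochs by Lemma \ref{lem:sum_of_roots} together with the $O(SA\log(MT))$ epoch count produces the dominant $\tilde{O}(DS\sqrt{MAT})$ term. Finally, the probability that the true MDP lies outside $\mathscr{M}(t)$ is controlled using Lemma \ref{lem:prob_deviation} and Hoeffding's inequality together with a union bound over $s,a$, the possible values of $N(s,a)$, and time steps $t\geq (MT)^{1/4}$, bounding the failure probability by $1/(MT)^{5/4}$.

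The main obstacle I anticipate is the careful handling of the sequential server processing when setting up the martingale. I need to define a filtration indexed by the server's processing order $(i,t)$ such that, at each increment, the conditional distribution of $s_{i,t+1}$ is still $P(\cdot|s_{i,t},a_{i,t})$; this relies crucially on the cross-environment independence assumption stated in Section \ref{sec:sys_model}. A related subtlety is that epochs may end mid-``wall clock'' time step (since the server triggers an epoch as soon as the doubling condition fires for some agent within a round), so the telescoping per-epoch regret decomposition must be carried out over the server's processing index rather than the physical clock $t$, and the analog of Lemma \ref{lem:bounded_ep_length} is straightforward (in fact sharper) because there is a single centralized counter rather than $M$ parallel ones. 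Once these bookkeeping points are resolved, the remaining analysis is a direct adaptation of the UCRL2 argument of \citep{jaksch2010near} with $T$ replaced by $MT$, following the template already laid down for Theorem \ref{thm:regret_bound}.
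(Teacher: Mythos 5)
Your proposal is correct and follows essentially the same route as the paper's proof: the same four-source decomposition over server-indexed epochs, the doubling-based $O(SA\log(MT))$ epoch count, Lemma \ref{lem:sum_of_roots} with Cauchy--Schwarz for the dominant $\Tilde{O}(DS\sqrt{MAT})$ term, and the same union-bound treatment of the failure event for $t\geq (MT)^{1/4}$. The only (cosmetic) difference is that you bound the bias-deviation term as a single martingale of length $MT$ under the server's interleaved filtration via Azuma--Hoeffding, whereas the paper regroups those terms by agent and invokes Lemma \ref{lem:independent_martingale_sum} for $M$ independent length-$T$ martingales; both yield the same $\Tilde{O}(D\sqrt{MT})$ contribution.
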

\begin{proof}[Proof Outline]
Similar to the proof of Theorem \ref{thm:regret_bound}, we again consider the four sources of regret. Then, breaking the regret into episodes when the \textsc{mod-UCRL2} algorithm updates policy, we obtain the required bound.
The complete proof is provided in  Appendix \ref{sec:regret_proof_mod_ucrl}.
\end{proof}

\section{Evaluations}\label{sec:evaluations}
In this section, we analyze the performance of the proposed \NAM\ algorithm empirically. We test the \NAM\ algorithm in multiple environments and also vary the number of agents in all the cases to study  the regret growth with respect to $M$. Further, we also evaluate the average number of communication steps used  by the \NAM\  algorithm till time step $T$.

\begin{figure*}[!ht]
    \centering
    \subfigure[RiverSwim environment]{
        \includegraphics[trim=0.5in 0.25in 0.75in 0.55in, clip, width=0.5\textwidth]{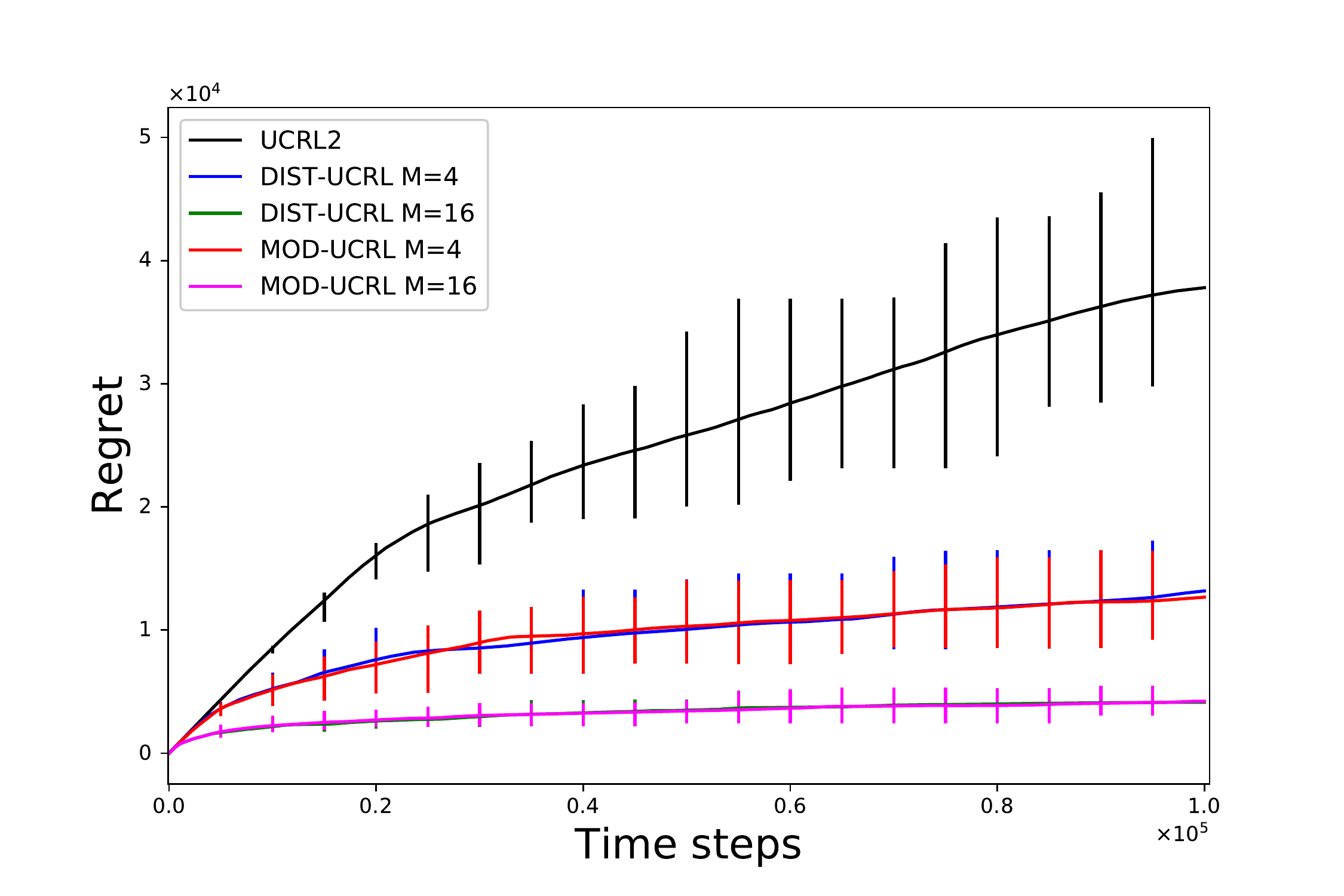}
        \label{fig:regret_rs_6}
    }
    \hspace{.1in}
    \subfigure[RiverSwim-12 environment]{
        \includegraphics[trim=0.5in 0.25in 0.75in 0.55in, clip, width=0.5\textwidth]{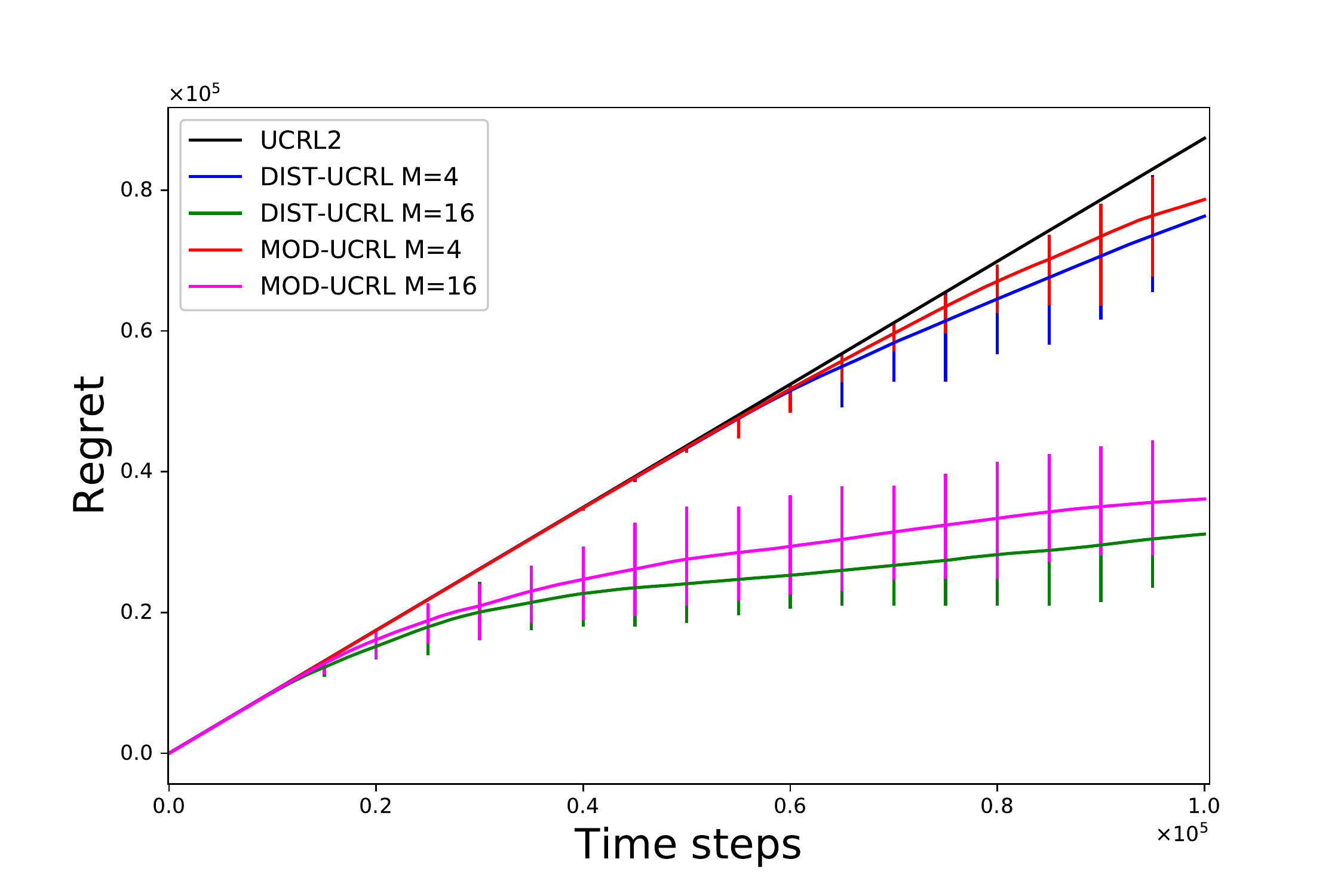}
        \label{fig:regret_rs_12}
    }
    \hspace{.1in}
    \subfigure[Gridworld environment]{
        \includegraphics[trim=0.5in 0.25in 0.75in 0.55in, clip, width=0.5\textwidth]{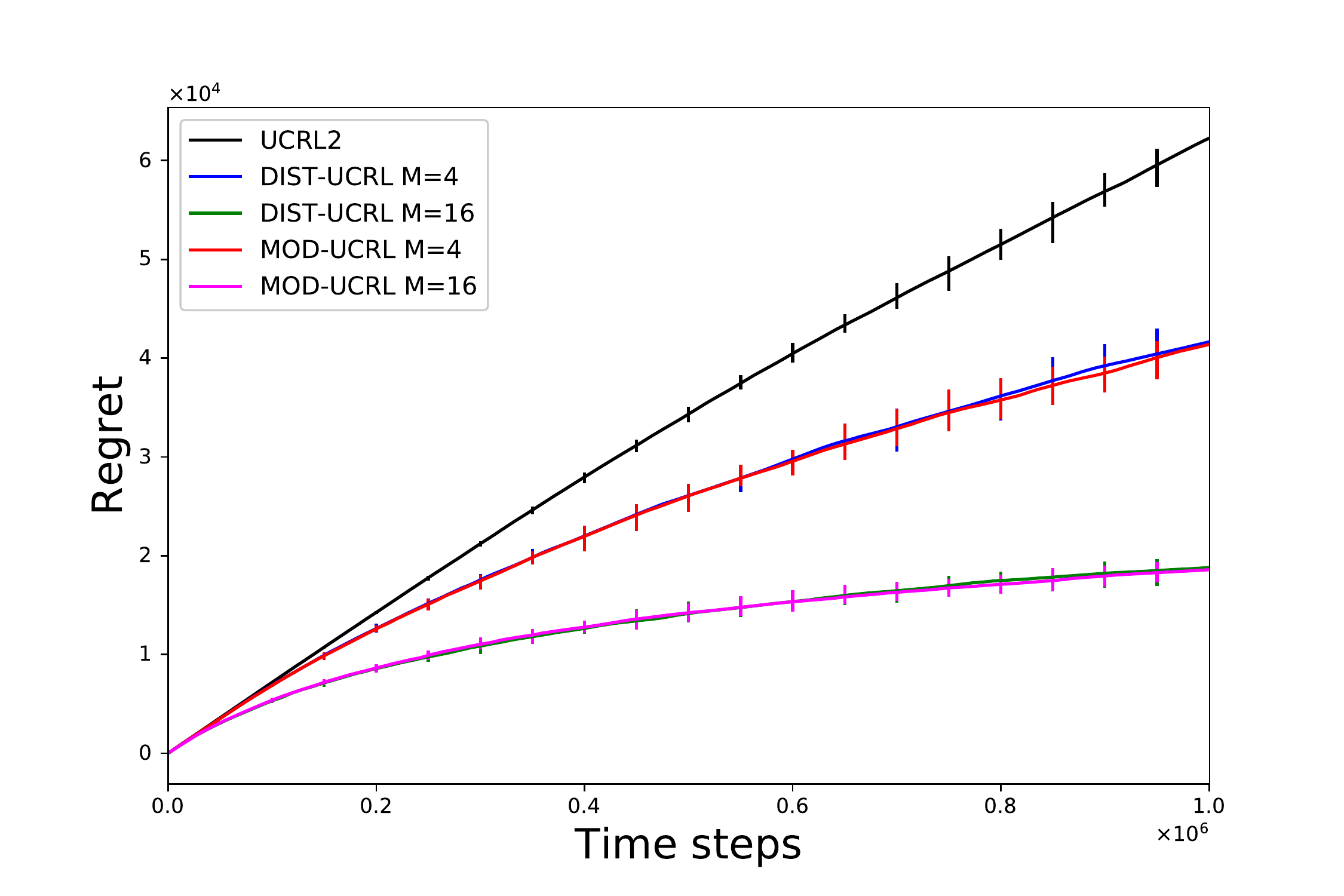}
        \label{fig:regret_gw}
    }    
    \caption{Average cumulative regret per agent under various communication strategies. The empirical regret of the \NAM\ algorithm and \textsc{mod-UCRL2} algorithm are almost identical which is expected from the regret analysis of the two algorithms.}
    \label{fig:per_agent_regret}
    \vspace{-.1in}
\end{figure*}

We first run the \NAM\ algorithm for the riverswim environment, which is a standard benchmark for model-based RL algorithm \citep{ian2013more,tossou2019near} with $6$ states and $2$ actions.
Next, we construct an extended riverswim environment with $12$ states and $2$ actions.
Finally, we use a Grid-World environment \citep{sutton2018reinforcement} for a $7\times7$ grid which amounts to $20$ states and $4$ actions.

We compare the \NAM\ algorithm against the \textsc{mod-UCRL2} algorithm. We also compare with the standard UCRL2 algorithm for $M=1$. Note that for the case of $M=1$, both, \NAM\ algorithm and the \textsc{mod-UCRL2} algorithm reduces to the UCRL2 algorithm.

\begin{figure*}[!th]
    \centering
    \subfigure[RiverSwim environment]{
        \includegraphics[trim=0.50in 0.25in 0.75in 0.55in, clip, width=0.5\textwidth]{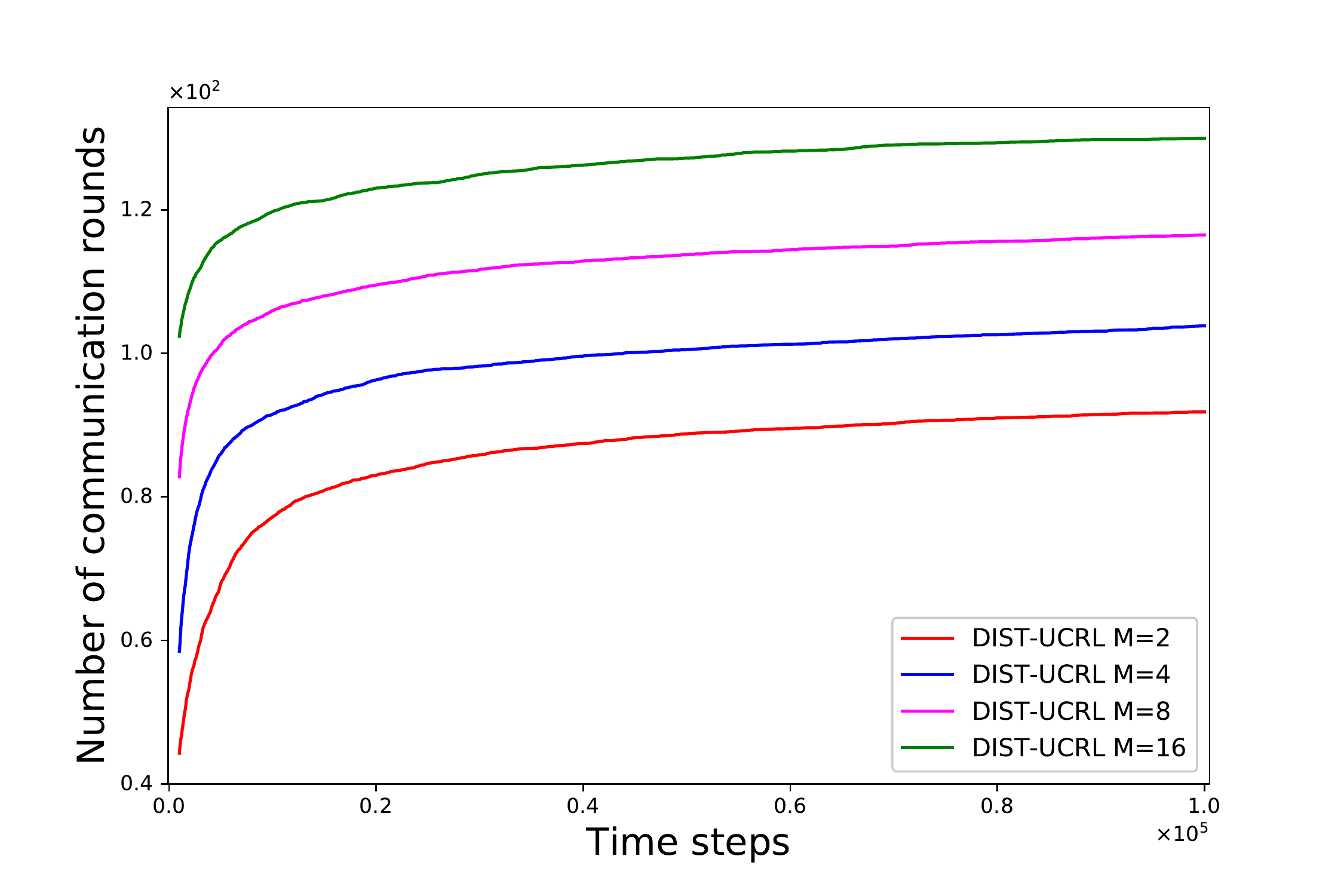}
        \label{fig:comm_rs_6}
    }
    \hspace{.1in}
    \subfigure[RiverSwim-12 environment]{
        \includegraphics[trim=0.5in 0.25in 0.75in 0.55in, clip, width=0.5\textwidth]{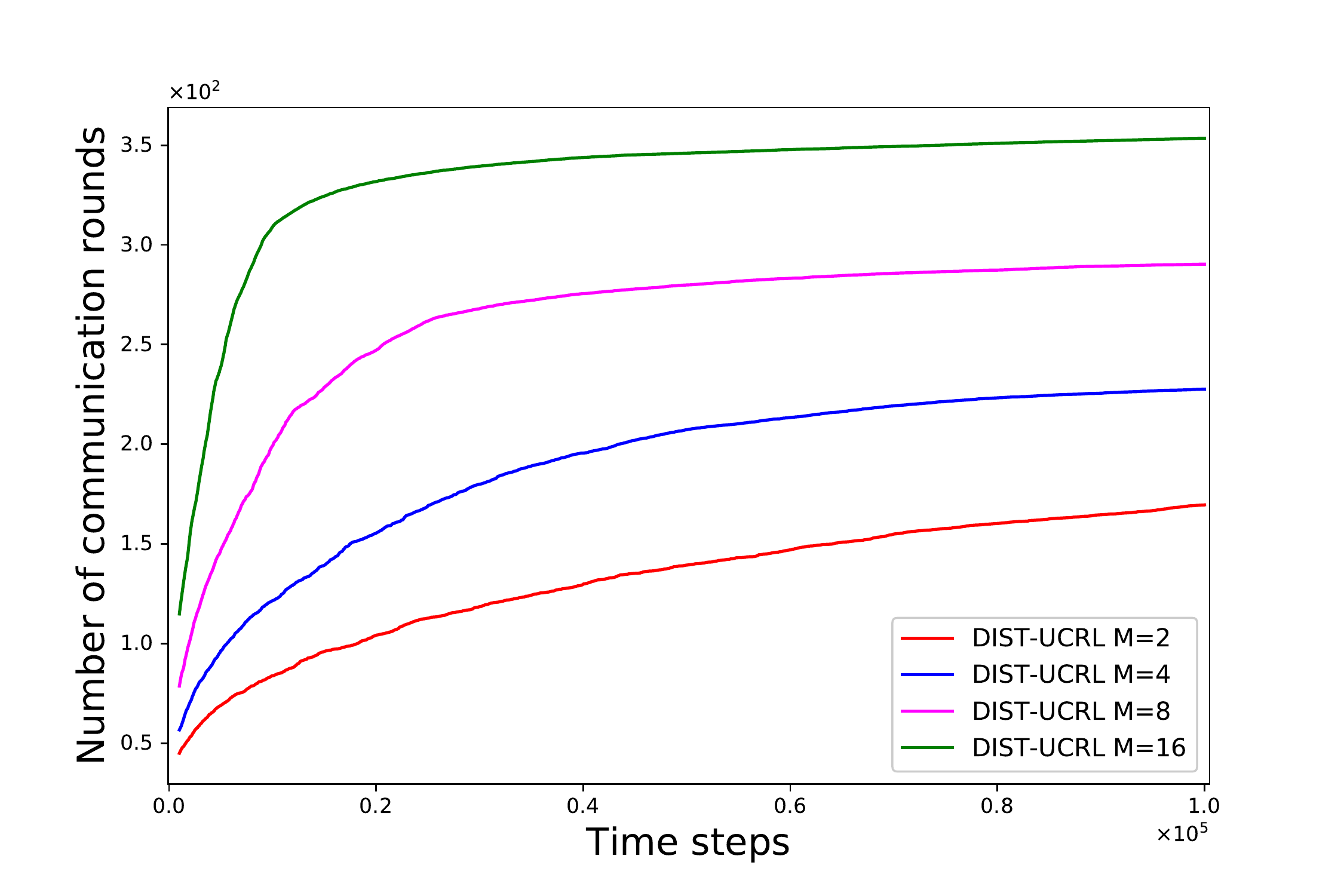}
        \label{fig:comm_rs_12}
    }
    \hspace{.1in}
    \subfigure[Gridworld environment]{
        \includegraphics[trim=0.5in 0.25in 0.75in 0.55in, clip, width=0.5\textwidth]{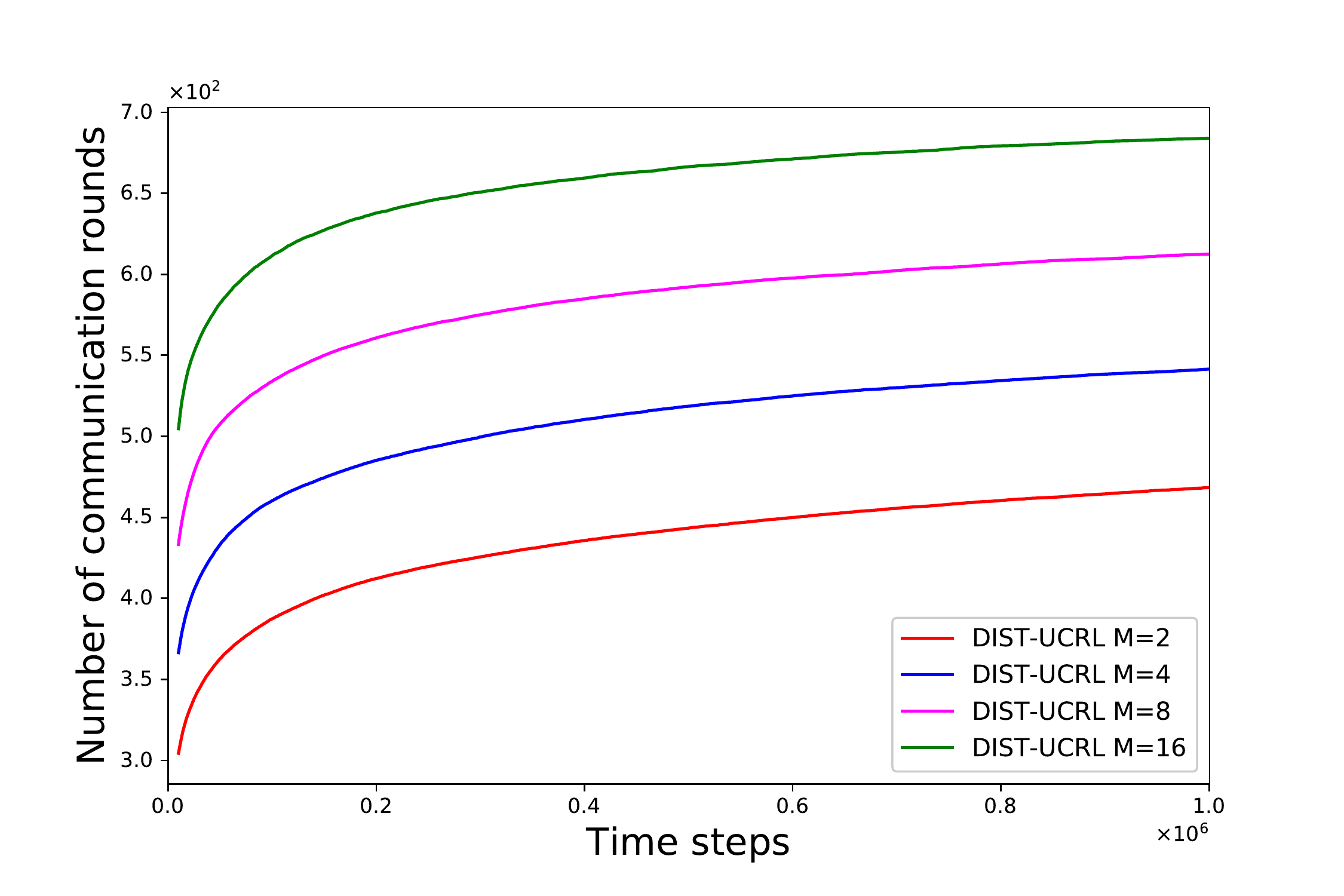}
        \label{fig:comm_gw}
    }    
    \caption{Total number of communication rounds required for the \NAM\ algorithm for multiple number agents across various environments.}
    \label{fig:total_communications}
    \vspace{-.1in}
\end{figure*}

We run $50$ independent iterations of the algorithm. We plot the average per-agent regret, $\Delta(\mathcal{M}, \NAM, s, T)/M$, over the $50$ iterations and the error bars  for both \NAM\ and \textsc{mod-UCRL2}  algorithm in Figure \ref{fig:per_agent_regret}. We vary the number of agents $M$ as $1, 4$, and $16$ to reduce clutter in figures. From Figure \ref{fig:regret_rs_6} and Figure \ref{fig:regret_gw}, we note that the per-agent regret decreases approximately by a factor of $2$ for every $4$-fold increase in the number of agents. This is expected and was indeed the goal. 

Note that in Figure \ref{fig:regret_rs_12}, the per-agent regret falls from being linear in UCRL2 ($M=1$) to significantly sublinear as the number of agents were increased to $M=4$ and $M=16$. For the extended RiverSwim environment with $12$ states, the diameter is approximately $5\times10^4$. Hence, the available time horizon of $T = 10^5$ was not sufficient for UCRL2. However, distributed RL algorithms still observed a sub-linear regret by collating their knowledge. This further demonstrates the advantage of deploying multiple parallel agents in complex environments to enhance exploration.

We also empirically evaluate the number of communication rounds required by the \NAM\ algorithm. Note that the number of communications for \textsc{mod-UCRL2}  algorithm is same as the total length for which the algorithm runs. Hence, we only plot the number of communications rounds required for the \NAM\ algorithm. We vary the number of agents $M$ in multiples for $2$ and starting from $M=2$ agents. We again run the experiment for $50$ independent iterations and plot the average number of synchronization rounds till time step $t$. 
 
We plot the number of communication rounds of the \NAM\ algorithm in Figure \ref{fig:total_communications}. We observe that the number of the synchronization rounds increase very slowly with $t$. Further, as suggested from Theorem \ref{thm:bound_on_episodes}, the number of synchronization rounds are increasing in $M$. However, we note that for large values of $t$, the increase in the number of communication rounds is sub-linear. This is because for the bound on the number of communication rounds, we take pessimistic estimates on the increase on the visitation counts $N_k(s,a)$ where only one agent $i$ updates total $N_k(s,a)$ by a factor of $(1 + 1/M)$ only when the agent $i$ triggers the synchronization round for $s,a$. However, in practice, $N_k(s,a)$ grows faster than $(1+1/M)$, as the synchronization rounds triggered for other state action pairs and other agents also contribute towards $N_k(s,a)$.
\section{Conclusion}\label{sec:concl}
In this work, we considered the problem of simultaneously reducing the cumulative regret and the number of communication rounds between $M$ agents. The $M$ agents interact with $M$ independent and identical Markov Decision Processes and share their data to learn the optimal policy faster. To this end, we  proposed \NAM, an epoch-based algorithm, following which the agents communicate in the beginning of every epoch. The data collected from the $M$ agents allows obtaining tighter deviation bounds and hence a smaller confidence set. Further, the agents trigger an epoch only after collecting sufficient samples in every epoch. This allows us to bound the regret of the \NAM\ algorithm as $\Tilde{O}(DS\sqrt{MAT})$ and the number of communication rounds as $O(MAS\log_2(MT))$. To analyze our algorithm, we also provide a concentration inequality for $M$ independent Martingale sequence of equal length which may be of an independent interest. We also evaluate the algorithm empirically and found that the average per-agent regret decreases as $O(1/\sqrt{M})$.

For comparison, we consider an extension of the UCRL2 algorithm for $M$ parallel agents working in round-robin sequence, and denote this algorithm as \textsc{mod-UCRL2}. We show that this algorithm also achieves the same regret bound as \NAM\ algorithm, while with $O(T)$ communication rounds. The regret guarantee required the techniques developed for \NAM. The evaluation results demonstrate the effectiveness of the proposed \NAM\ algorithm in achieving the same regret bound as \textsc{mod-UCRL2}, while with lower communication. Thus, the proposed \NAM\ algorithm can be utilized in training multiple parallel power starved devices using reinforcement learning.

Possible future works for the proposed works include analysing parallel RL and \NAM\ algorithm with differential privacy, where agents may send data with additional noise to protect privacy. %

\bibliography{refs}
\onecolumn
\appendices
\section{Proof of the concentration bound of sum of independent Martingales}\label{sec:proof_sum_martingale}
\begin{lemma}
Let $\{X_{i,t}\}_{t=1}^T$ be a zero-mean Martingale sequence for $i = 1,\cdots, M$ adapted to filtration $\{\mathcal{F}_t\}_{t=0}^T$. Then, if $\{X_{i,t}\}_{t=1}^T$ and $\{X_{j,t}\}_{t=1}^T$ are independent for all $i\neq j$ and $|X_{i,t}|X_{i,t-1}|\leq c$ for all $i, t$, we have,
\begin{align}
    P(\sum_{t=1}^T\sum_{i=1}^M X_{i,t}\geq \epsilon) \leq \exp{\left(-\frac{2\epsilon^2}{MTc^2}\right)}
\end{align}
\end{lemma}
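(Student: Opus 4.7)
The plan is to adapt the standard Chernoff / moment-generating-function (MGF) proof of the Azuma--Hoeffding inequality to the setting of $M$ mutually independent martingale difference sequences. The intuition is simple: because the sequences across agents are independent and each has bounded conditional increments, the concatenation of all $M$ sequences behaves like a single martingale difference sequence of length $MT$ with the same per-step bound $c$, so I expect to recover Azuma--Hoeffding with $T$ replaced by $MT$.

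Concretely, I would first apply Markov's inequality to the positive random variable $\exp\bigl(\lambda \sum_{i,t} X_{i,t}\bigr)$ for a free parameter $\lambda > 0$, giving
\[
P\Bigl(\sum_{t=1}^T \sum_{i=1}^M X_{i,t} \geq \epsilon\Bigr) \;\leq\; e^{-\lambda \epsilon}\, E\Bigl[\exp\Bigl(\lambda \sum_{t=1}^T \sum_{i=1}^M X_{i,t}\Bigr)\Bigr].
\]
Next, using mutual independence of the sequences across $i$, I would factor the joint MGF as $\prod_{i=1}^M E\bigl[\exp(\lambda \sum_t X_{i,t})\bigr]$. For each fixed $i$, I would peel off the last time step by conditioning on $\mathcal{F}_{T-1}$ and applying Hoeffding's lemma to the bounded, zero-conditional-mean random variable $X_{i,T}$, obtaining $E\bigl[\exp(\lambda X_{i,T}) \mid \mathcal{F}_{T-1}\bigr] \leq \exp(\lambda^2 c^2/8)$. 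Iterating this $T$ times gives $E\bigl[\exp(\lambda \sum_t X_{i,t})\bigr] \leq \exp(\lambda^2 T c^2/8)$, and multiplying over the $M$ agents produces an overall MGF bound $\exp(\lambda^2 MT c^2/8)$. Substituting back and optimizing $\lambda = 4\epsilon/(MT c^2)$ yields the claimed sub-Gaussian tail $\exp\bigl(-2\epsilon^2/(MT c^2)\bigr)$.

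The main obstacle is the clean handling of the filtration in the factorization step: although the sequences are mutually independent across $i$, they are all adapted to a common filtration $\{\mathcal{F}_t\}$, so one has to verify that the conditional expectations really do factor at each iteration. A clean way to sidestep this bookkeeping is to pass to the enlarged product filtration generated by all the sequences jointly, and reinterpret the concatenation $(X_{1,1},\dots,X_{M,1},X_{1,2},\dots,X_{M,T})$ as a \emph{single} length-$MT$ martingale difference sequence with increments uniformly bounded by $c$; with respect to this enlarged filtration, each $X_{i,t}$ still has zero conditional mean because the other sequences are independent of it. Once that reduction is in place, the result follows immediately by a direct invocation of the standard single-sequence Azuma--Hoeffding inequality with $T$ replaced by $MT$, and the remaining work is purely routine algebra to extract the stated constants.
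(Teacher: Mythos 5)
Your proposal is correct and follows essentially the same route as the paper's own proof: a Chernoff/MGF argument that factors the joint moment generating function across the $M$ independent sequences, iterates Hoeffding's lemma over the $T$ time steps within each sequence, and then optimizes the free parameter to obtain $\exp\left(-\frac{2\epsilon^2}{MTc^2}\right)$. Your additional remark about reducing to a single length-$MT$ martingale difference sequence via an enlarged filtration is a reasonable way to tidy the conditioning bookkeeping, but it is not needed beyond what the paper already does.
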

\begin{proof}
For any $s> 0$, we can write
\begin{align}
    &P\left(\sum_{t=1}^T\sum_{i=1}^M X_{i,t}\geq \epsilon\right)\\
    &= P\left((\exp{\left(s\left(\sum_{i=1}^M\sum_{t=1}^T X_{i,t}\right)\right)}\geq \exp{(s\epsilon)}\right)\\
    &\leq \frac{\mathbb{E}\left[\exp{\left(s\left(\sum_{i=1}^M\sum_{t=1}^T X_{i,t}\right)\right)}\right]}{\exp{(s\epsilon)}}\\
    &= \frac{\mathbb{E}\left[\prod_{i=1}^M\exp{\left(s\left(\sum_{t=1}^T X_{i,t}\right)\right)}\right]}{\exp{(s\epsilon)}}\\
    &= \frac{\prod_{i=1}^M\mathbb{E}\left[\exp{\left(s\left(\sum_{t=1}^T X_{i,t}\right)\right)}\right]}{\exp{(s\epsilon)}}\label{eq:use_independence}\\
    &= \frac{\prod_{i=1}^M\mathbb{E}\left[\exp{\left(s\sum_{t=1}^{T-1} X_{i,t}\right)\mathbb{E}\left[\exp{\left(sX_{i,T}\right)}|X_{i,T-1}\right]}\right]}{\exp{(s\epsilon)}}\nonumber\\
    &\leq \frac{\prod_{i=1}^M\mathbb{E}\left[\exp{\left(s\left(\sum_{t=1}^{T-1} X_{i,t}\right)\right)}\exp{(s^2c^2/8)}\right]}{\exp{(s\epsilon)}}\label{eq:use_Hoeffding_lemma}\\
    &\leq \frac{\prod_{i=1}^M\prod_{t=1}^T\exp{\left(s^2c^2/8\right)}}{\exp{(s\epsilon)}}\label{eq:iterative_Hoeffding}\\
    &= \frac{\exp{\left(MTs^2c^2/8\right)}}{\exp{(s\epsilon)}}\label{eq:to_minimize}\\
    &\leq \sup_s\frac{\exp{\left(MTs^2c^2/8\right)}}{\exp{(s\epsilon)}} = \exp{\left(-\frac{2\epsilon^2}{MTc^2}\right)}
\end{align}
Equation \eqref{eq:use_independence} follows from independence of sequences from each other. Equation \eqref{eq:use_Hoeffding_lemma} follows from Hoeffding's lemma \citep{hoeffding1994probability} on the last random variable of sequence, and Equation \eqref{eq:iterative_Hoeffding} follows from applying Hoeffding's lemma iteratively. Choosing $s$ that minimizes the expression in Equation \eqref{eq:to_minimize} we get the required result.
\end{proof}
\newpage
\section{Bounds on Probability of Event $\mathcal{M}\notin\mathscr{M}(t)$}\label{sec:prob_bound}
\begin{lemma}
The probability of the event that the set $\mathscr{M}(t)$ does not contain the true MDP $\mathcal{M}$ is bounded as:
\begin{align}
    \mathbb{P}\left(\mathcal{M}\notin\mathscr{M}(t)\right) \leq \frac{1}{15(Mt)^6}
\end{align}
\label{lem:prob_bound}
\end{lemma}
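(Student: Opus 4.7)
\medskip
\noindent\textbf{Proof Proposal for Lemma~\ref{lem:prob_bound}.}

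The plan is to decompose the failure event $\{\mathcal{M}\notin\mathscr{M}(t)\}$ across the two defining inequalities~\eqref{eq:reward_deviation} and~\eqref{eq:prob_deviation}, control each slice by a concentration bound for a \emph{fixed} sample size, and then union bound over all possible sample counts and all $(s,a)$ pairs. Concretely, $\mathcal{M}\notin\mathscr{M}(t)$ iff there exists some $(s,a)\in\mathcal{S}\times\mathcal{A}$ for which either the empirical reward mean or the empirical transition vector exceeds its tolerance. Since the total visit count satisfies $N(s,a)\le Mt$, I condition on every realizable value $n\in\{1,\ldots,Mt\}$ of $N(s,a)$ and apply Lemma~\ref{lem:prob_deviation} (transitions) and Lemma~\ref{lem:Hoeffdings} (rewards) for that fixed $n$. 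This is legitimate because, given that $(s,a)$ has been visited $n$ times across all $M$ agents, the corresponding next-state and reward samples are i.i.d.\ draws from $P(\cdot|s,a)$ and the reward distribution with mean $\bar r(s,a)$, so the empirical averages built from the first $n$ visits concentrate at the rate prescribed by those lemmas, independently of when and by which agent those visits occurred.

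First I would plug in the tolerance from~\eqref{eq:prob_deviation} into Lemma~\ref{lem:prob_deviation}: with $\epsilon^2 = 14S\log(2MAt)/n$, the bound becomes
\begin{equation*}
\mathbb{P}\!\left(\|\hat p(\cdot|s,a)-P(\cdot|s,a)\|_1 \ge \sqrt{\tfrac{14S\log(2MAt)}{n}}\right)
\le 2^{S}\exp\!\bigl(-7S\log(2MAt)\bigr)
= \frac{2^{S}}{(2MAt)^{7S}}.
\end{equation*}
Similarly, substituting the reward tolerance from~\eqref{eq:reward_deviation} into Hoeffding's inequality (two-sided) yields
\begin{equation*}
\mathbb{P}\!\left(|\hat{\bar r}(s,a)-\bar r(s,a)| \ge \sqrt{\tfrac{7\log(2MSAt)}{2n}}\right)
\le \frac{2}{(2MSAt)^{7}}.
\end{equation*}
Notice that these per-$n$ bounds are already so small that summing them over $n\in\{1,\ldots,Mt\}$ costs only a factor of $Mt$, and then summing over the $SA$ state–action pairs leaves a leading factor of $SA\cdot Mt$.

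Next, I would assemble the union bound. The total probability is at most
\begin{equation*}
SA\cdot Mt\cdot \frac{2^{S}}{(2MAt)^{7S}} \;+\; SA\cdot Mt\cdot \frac{2}{(2MSAt)^{7}}.
\end{equation*}
A short arithmetic check shows each summand is bounded by $\frac{1}{64(Mt)^{6}}$: the reward term simplifies directly to $\frac{1}{64 S^{6}A^{6}(Mt)^{6}}\le \frac{1}{64(Mt)^{6}}$, while for the transition term one uses $7S-1\ge 6$ together with $A\ge 1$ and the elementary inequality $S\cdot 2^{-6S}\le 1/64$ valid for $S\ge 1$ to reduce it to $\frac{1}{64(Mt)^{6}}$ as well. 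Adding the two gives $\frac{1}{32(Mt)^6}\le \frac{1}{15(Mt)^6}$, as claimed.

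The only step that requires care is the justification that the empirical estimators built at a random stopping time behave like averages of $n$ i.i.d.\ samples, so that Lemma~\ref{lem:prob_deviation} and Lemma~\ref{lem:Hoeffdings} apply for each fixed $n$; this is the classical ``peeling over sample size'' argument and is handled cleanly by enumerating $n$ from $1$ to $Mt$. The rest is bookkeeping: counting exponents in $(2MAt)^{7S}$ and $(2MSAt)^7$ to verify that the loss factors $Mt$ (from the $n$-union) and $SA$ (from the state–action union) are absorbed while leaving an exponent of at least $6$ in $Mt$.
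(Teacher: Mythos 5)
Your proposal is correct and follows essentially the same route as the paper's proof in Appendix~\ref{sec:prob_bound}: decompose the failure event over the reward and transition inequalities, apply Lemma~\ref{lem:prob_deviation} and Hoeffding for each fixed sample size $n$, then union bound over $n$ and over the $SA$ state--action pairs. The only differences are bookkeeping (you substitute the confidence widths directly rather than via the paper's intermediate choice of $\epsilon$, and you peel $n$ up to $Mt$ rather than $t$, which is if anything slightly more careful), and your resulting constant $\tfrac{1}{32(Mt)^6}$ is within the claimed $\tfrac{1}{15(Mt)^6}$.
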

\begin{proof}
From Lemma \ref{lem:prob_deviation}, the $\ell_1$ distance of a probability distribution over $S$ events with $n$ samples is bounded as:
\begin{align}
    \mathbb{P}\left(\|P(\cdot|s,a) - \hat{P}(\cdot|s,a)\|_1\geq \epsilon\right)\leq (2^S-2)\exp{\left(-\frac{n\epsilon^2}{2}\right)}\le (2^S)\exp{\left(-\frac{n\epsilon^2}{2}\right)}
\end{align}

This, for some given number of visits $n(s,a)$ to state action pair $(s,a)$ and $\epsilon = \sqrt{\frac{2}{n(s,a)}\log(2^S20 SA(Mt)^7)}\leq \sqrt{\frac{14S}{n(s,a)}\log(2A(Mt))}$ gives,
\begin{align}
    \mathbb{P}\left(\|P(\cdot|s,a) - \hat{P}(\cdot|s,a)\|_1\geq \sqrt{\frac{14S}{n(s,a)}\log(2A(Mt))}\right)&\leq (2^S)\exp{\left(-\frac{n(s,a)}{2}\frac{2}{n(s,a)}\log(2^S20 SA(Mt)^7)\right)}\\
    &= 2^S \frac{1}{2^S 20 SA(Mt)^7} = \frac{1}{20AS(Mt)^7}
\end{align}

We sum over the all the possible values of $n(s,a)$ till $t$ time-step to bound the probability that the event $\mathcal{E}_t$ does not occur as:
\begin{align}
    \sum_{n(s,a)=1}^t \frac{1}{20SA(Mt)^7} \leq \frac{1}{20SA(Mt)^6}
\end{align}

Also, the rewards $r_t$, conditioned on $(s_t, a_t) = (s,a)$, are independent and identically distributed over $[0,1]$ with mean $\bar{r}(s,a)$. Hence, using Hoeffding's concentration bounds, we have the $\ell_1$ distance of reward estimates $\hat{r}(s,a)$ of $\bar{r}(s,a)$ with $n$ samples is bounded as:
\begin{align}
    \mathbb{P}\left(|\hat{r}(s,a) - \bar{r}(s,a)|\geq \epsilon\right)\leq 2\exp{\left(-2n\epsilon^2\right)}
\end{align}

This, for $\epsilon = \sqrt{\frac{1}{2n(s,a)}\log(120 SA(Mt)^7)}\leq \sqrt{\frac{7}{2n(s,a)}\log(2 SA(Mt))}$ gives,
\begin{align}
    \mathbb{P}\left(|\hat{r}(s,a) - \bar{r}(s,a)|\geq \epsilon\right)&\leq 2\exp{\left(-2n\frac{\log(120SA(Mt)^7)}{2n(s,a)}\right)}\\
    &\leq 2\frac{1}{120SA(Mt)^7}= \frac{1}{60SA(Mt)^7}
\end{align}

We sum over the all the possible values of $n(s,a)$ till $t$ time-step to bound the probability that $\mathcal{M}\notin \mathcal{M}(t)$ does not occur as:
\begin{align}
    \mathbb{P}\left(\|P(\cdot|s,a) -\hat{P}(\cdot|s,a) \|_1\geq \sqrt{\frac{14S}{N(s,a)}\log(2A(Mt))}\right)\leq  \sum_{n(s,a)=1}^t\frac{1}{20SA(Mt)^7} \leq \frac{1}{20SA(Mt)^6}\\
    \mathbb{P}\left(|\hat{r}(s,a) -\bar{r}(s,a)|\geq \sqrt{\frac{7}{2N(s,a)}\log(2SA(Mt))}\right)\leq  \sum_{n(s,a)=1}^t\frac{1}{60SA(Mt)^7} \leq \frac{1}{60SA(Mt)^6}
\end{align}
where $N(s,a)$ denotes the number of visitations to $s,a$ till $t$.

Finally, summing over all the $s,a$, we get,
\begin{align}
    \mathbb{P}\left(\mathcal{M}\notin \mathscr{M}(t)\right)\leq \frac{1}{15(Mt)^6}
\end{align}
\end{proof}

\section{Proof of the Regret Bound}\label{sec:regret_proof}

\begin{proof}
We want to bound $\Delta(\mathcal{M}, \NAM, s, T)$. The \NAM\ algorithm proceeds in epoch and generates a new policy for each epoch $k$ using an optimistic MDP from set $\mathscr{M}(t_k)$. For that, we construct two cases as enumerated:
\begin{enumerate}[label=(\alph*)]
    \item \textbf{Case where the true MDP $\mathcal{M}$ lies in the set $\mathscr{M}(t_k)$ for all $k$:}. In this case, we will bound the regret with the terms corresponding to parts \textbf{1},  \textbf{2}, and  \textbf{3}, of the four sources of regret as mentioned in the main text, and analyze each of them. 

From the definition of regret, we have:
\begin{align}
    \Delta(\mathcal{M}, \NAM, s, T)&= \sum_{i=1}^M\sum_{t=1}^T\rho^* - \sum_{i=1}^M\sum_{t=1}^Tr_{i,t}\nonumber\\
    &= \sum_{i=1}^M\sum_{t=1}^T(\rho^* - \mathbb{E}[r_{i,t}]) + \sum_{i=1}^M\sum_{t=1}^T\left(\mathbb{E}[r_{i,t}] - r_{i,t}\right)\nonumber\\
    &= \sum_{i=1}^M\sum_{k= 1}^m\sum_{s,a}\nu_{i,k}(s,a)\left(\rho^*- \bar{r}(s,a)\right)+ \sum_{i=1}^M\sum_{t=1}^T\left(\mathbb{E}[r_{i,t}] - r_{i,t}\right)\label{eq:exp_reward_to_mean_reward}\\
    &\leq \sum_{i=1}^M\sum_{k= 1}^m\sum_{s,a}\nu_{i,k}(s,a)\left(\Tilde{\rho}_{t_k} +  \frac{1}{\sqrt{Mt_k}} - \bar{r}(s,a)\right)+ \sum_{i=1}^M\sum_{t=1}^T\left(\mathbb{E}[r_{i,t}] - r_{i,t}\right)\label{eq:optimism}
\end{align}
Equation \eqref{eq:exp_reward_to_mean_reward} follows from the fact that at time $t$ playing action $a$ in state $s$, the expected reward $\mathbb{E}[r_{i,t}] = \bar{r}(s,a)$ for any agent $i\in[M]$. Equation \eqref{eq:optimism} follows from the fact that the \textsc{Extended Value Iteration} algorithm returns an $1/\sqrt{Mt_k}$-optimal policy for the optimistic MDP $\Tilde{\mathcal{M}_t}\in\mathscr{M}(t)$ at time $t_k$ and hence the gain $\Tilde{\rho}_{t_k}$ of the policy $\pi_k$ for $\Tilde{\mathcal{M}}$ satisfies $\Tilde{\rho}_{t_k}\geq \rho^*(\Tilde{\mathcal{M}}) - \frac{1}{\sqrt{Mt_k}}\geq \rho^*(\mathcal{M}) - \frac{1}{\sqrt{Mt_k}} = \rho^* - \frac{1}{\sqrt{Mt_k}}$. Further, the algorithm runs the policy $\pi_k$ for entire epoch $k$. We will now separate the terms in Equation \eqref{eq:optimism} to different parts as explained below. 
\\
\\
\textbf{1. Regret from deviating from expected reward:} This denotes the second summation in Equation \eqref{eq:optimism}. Since the reward $r_{i,t}\in[0,1]$, we use Hoeffding's Inequality (Lemma \ref{lem:Hoeffdings}) to bound the deviation of observed rewards from expected rewards with high probability:

\begin{align}
    \mathbb{P}\left(\sum_{i=1}^M\sum_{t=1}^T\left(\mathbb{E}[r_{i,t}] - r_{i,t}\right)\leq  \sqrt{\frac{5}{8}MT\log(8MT)}\right) &\leq \exp{\left(-\frac{2}{MT}\frac{5}{8}MT\log(8MT)\right)}\label{eq:reward_exp_reward_bound}\\
    &\leq \left(\frac{1}{4MT}\right)^{5/4}< \frac{1}{12(MT)^{5/4}}
\end{align}
This completes part \textbf{1.}

We now want to bound the first term in Equation \eqref{eq:optimism}. Recall that $t_k$ was defined as the time step whenever the epoch $k$ starts. Let $\Tilde{\mathcal{M}}_k = (\mathcal{S}, \mathcal{A}, \Tilde{P}_k, , \Tilde{r}_k)$ be the optimistic MDP in $\mathscr{M}(t)$. Then, we overload the notation and let $\Tilde{P}_k(s'|s) = \Tilde{P}(s'|s,\pi_k(s))$ denote the transition probability matrix for the Markov Chain induced by the policy $\pi_k$ on MDP $\Tilde{\mathcal{M}}$. Also, let $\Tilde{v}_k$ denote the bias vector of the policy $\pi_k$ for MDP $\Tilde{\mathcal{M}}$. Then $\Tilde{P}_k\Tilde{v}_k$ is a vector and $\Tilde{P}_k\Tilde{v}_k(s)$ denote its $s^{th}$ element. Similarly, we overload the true transition probability $P$ into a matrix $P_k$ and obtain a corresponding vector $P_kv_k$. Further, both $\bar{r}$, and $\Tilde{r}_k$ satisfy Equation \eqref{eq:reward_deviation} for mean estimated mean $\hat{\bar{r}}$. Hence, for all $s,a$, we have:
\begin{align}
|\Tilde{r}_k(s,a)- \bar{r}(s,a)| &\leq |\Tilde{r}_k(s,a)- \hat{\bar{r}}_k(s,a)|+|\hat{\bar{r}}_k(s,a)- \bar{r}(s,a)|\\
&\leq   2\sqrt{\frac{7\log(2MSAt_k)}{\max\{1, N_k(s,a)\}}} \eqqcolon d_k(s,a) \label{eq:diam_reward}
\end{align}

We now consider the gain-bias relationship from \citep{puterman1994markov} as:
\begin{align}
    &\sum_{i\in[M]}\sum_{k=1}^m\sum_{s,a}\nu_{i,k}(s,a)\left(\Tilde{\rho}_{t_k} + \frac{1}{\sqrt{Mt_k}}- \bar{r}(s,a)\right)\nonumber\\
    &= \sum_{i\in[M]}\sum_{k=1}^m\sum_{s,a}\nu_{i,k}(s,a)\left(\Tilde{\rho}_{t_k}- \Tilde{r}_k(s,a) + \left(\Tilde{r}_k(s,a)- \bar{r}(s,a)\right) + \frac{1}{\sqrt{Mt_k}}\right)\nonumber\\
    &=\sum_{i\in[M]}\sum_{k=1}^m\sum_{s,a}\nu_{i,k}(s,a)\left(\Tilde{P}_k\Tilde{v}_k(s) - \Tilde{v}_k(s)+ \left(\Tilde{r}_k(s,a)- \bar{r}(s,a)\right)+ \frac{1}{\sqrt{Mt_k}}\right)\label{eq:gain_to_bias}\\
    &= \sum_{i\in[M]}\sum_{k=1}^m\sum_{s,a}\nu_{i,k}(s,a)(P_k\Tilde{v}_k(s) - \Tilde{v}_k(s)) \nonumber\\
    &~~~+ \sum_{i\in[M]}\sum_{k=1}^m\sum_{s,a}\nu_{i,k}(s,a)\left((\Tilde{P}_k\Tilde{v}_k(s) - P_k\Tilde{v}_k(s))+ \left(\Tilde{r}_k(s,a)- \bar{r}(s,a)\right) + \frac{1}{\sqrt{Mt_k}}\right)\label{eq:regret_split}
\end{align}
where Equation \eqref{eq:gain_to_bias} follows from Equation \eqref{eq:gain_bias_eqn}. Note that the first term in Equation \eqref{eq:regret_split} denotes part \textbf{2}, or the regret from deviating from the expected next state. The second term in Equation \eqref{eq:regret_split} denotes part \textbf{3}, or the regret when algorithm is not accurately optimizing for the true MDP.
\\
\\
\textbf{2. Regret from deviating from the expected next state:} Note that $t_{k+1}-1$ is the last time step of the epoch. Using this, we can upper bound the first term of the Equation \eqref{eq:regret_split} as:
\begin{align}
    \sum_{i\in[M]}\sum_{k=1}^m\sum_{s,a}\nu_{i,k}(s,a)(P\Tilde{v}_k(s) - \Tilde{v}_k(s))&= \sum_{i\in[M]}\sum_{k=1}^m\sum_{t=t_k}^{t_{k+1}-1}(P\Tilde{v}_k(s_{i,t}) - \Tilde{v}_k(s_{i,t}))\label{eq:return_s_to_s_t}\\
    &= \sum_{i\in[M]}\sum_{k=1}^m\sum_{t=t_k}^{t_{k+1}-1}(P\Tilde{v}_k(s_{i,t}) - \Tilde{v}_k(s_{i,t+1}))\nonumber\\
    &~~+\sum_{i\in[M]}\sum_{k=1}^m(\Tilde{v}_k(s_{i,t_{k+1}})-\Tilde{v}_k(s_{i,t_{k}}))\\
    &= \sum_{i\in[M]}\sum_{k=1}^m\sum_{t=t_k}^{t_{k+1}-1}(P\Tilde{v}_k(s_{i,t}) - \Tilde{v}_k(s_{i,t+1}))+MmD\label{eq:span_leq_D}\\
    &= \sum_{i\in[M]}\sum_{k=1}^m\sum_{t=t_k}^{t_{k+1}-1}(X_{i,t})+ MmD\\
    &= \sum_{i\in[M]}\sum_{t=1}^T(X_{i,t})+ MmD\label{eq:sum_indep_martingale}
\end{align}  
where Equation \eqref{eq:return_s_to_s_t} follows from the fact that $\sum_{a}\nu_{i,k}(s,a) = \sum_{t=t_k}^{t_{k+1}-1}\bm{1}\{s_t = s\}$. 
Equation \eqref{eq:span_leq_D} follows from Lemma \ref{lem:span_bounded}. Now, note that $X_{i,t}$ is a Martingale difference sequence for each $i\in[M]$. Further, the processes are independent as the environments of each agent are independent.
Hence, the first term in Equation \ref{eq:sum_indep_martingale} is sum of $M$ independent Martingale sequences of length $T$ such that $0\leq \Tilde{v}_k(s_{i,t})\leq D$. Hence, using the Lemma \ref{lem:independent_martingale_sum} for $c=2D$, with probability at least $1-1/(12(MT)^{5/4})$, we get:
\begin{align}
    \sum_{i=1}^M\sum_{k= 1}^m\sum_{s,a}\nu_{i,k}(s,a)(P\Tilde{v}_k(s) - \Tilde{v}_k(s)) &\leq D\sqrt{2MT\frac{5}{4}\log(8MT)} + MmD\label{eq:independent_Martingale}
\end{align}
This completes part \textbf{2.}\\
\\
\textbf{3. Regret from not optimizing for the true MDP:} We now attempt to bound the second term in Equation \eqref{eq:regret_split} which denotes the distance of the estimated transition probability and the true transition probabilities:
\begin{align}
&\sum_{i=1}^M\sum_{k= 1}^m\sum_{s,a}\nu_{i,k}(s,a)((\Tilde{P}_k\Tilde{v}_k(s) - P\Tilde{v}_k(s))+ ({\Tilde{r}}_k(s,a) - \bar{r}(s,a))+\frac{1}{\sqrt{Mt_k}})\nonumber\\
&\leq\sum_{s,a}\sum_{k= 1}^m\sum_{i=1}^M\nu_{i,k}(s,a)((\Tilde{P}_k\Tilde{v}_k(s) - P\Tilde{v}_k(s))+ d_k(s,a) + \frac{1}{\sqrt{Mt_k}})\label{eq:replace_reward_diff_with_d}\\
&\leq\sum_{s,a}\sum_{k= 1}^m\sum_{i=1}^M\nu_{i,k}(s,a)\left(\|\Tilde{P}_k - P\|_1\|\Tilde{v}_k(s)\|_\infty+ d_k(s,a)+ \frac{1}{\sqrt{Mt_k}}\right)\label{eq:holders}\\
&\leq \sum_{s,a}\sum_{k= 1}^m\sum_{i=1}^M\nu_{i,k}(s,a)\left(D\|\Tilde{P}_k - P\|_1+ d_k(s,a)+ \frac{1}{\sqrt{Mt_k}}\right)\label{eq:replace_span_with_D}\\
&\leq \sum_{s,a}\sum_{k= 1}^m\sum_{i=1}^M\nu_{i,k}(s,a)\left(D\|\Tilde{P}_k - P\|_1\right) + d_k(s,a)+  \sum_{s,a}\sum_{k= 1}^m\sum_{i=1}^M\nu_{i,k}(s,a)\left( \frac{1}{\sqrt{Mt_k}}\right)
\end{align}
where Equation \eqref{eq:replace_reward_diff_with_d} holds from using Equation \eqref{eq:diam_reward}. Equation \eqref{eq:holders} follows from H\"{o}lder's inequality. Equation \eqref{eq:replace_span_with_D} follows from Lemma \ref{lem:span_bounded} and noting that $\Tilde{v}$ is translation invariant, and hence we can choose $\min_s \Tilde{v}(s) \geq 0$. Now, note that $\Tilde{P}_k, P_k$, both satisfy Equation \eqref{eq:prob_deviation}. Hence, $\|\Tilde{P}_k-P_k\|_1$ is upper bounded by the diameter of the set $\mathscr{P}_{t_k}$. Further, $N_k(s,a)\leq \sum_{s,a} N_k(s,a) = Mt_k$. This gives us:

\begin{align}
&\leq \sum_{s,a}\sum_{k= 1}^m\sum_{i=1}^M\nu_{i,k}(s,a)\frac{2\sqrt{7\log(2MSAt_k)}+2D\sqrt{14S\log(2MAt_k)}+1}{\sqrt{N_k(s,a)}}\nonumber\\
&~~~+ \sum_{s,a}\sum_{k= 1}^m\sum_{i=1}^M\nu_{i,k}(s,a)\frac{1}{\sqrt{N_k(s,a)}}\\
&\leq 2\left(\sqrt{7\log(2MSAT) + 1}+D\sqrt{14S\log(2MAT)}\right)\nonumber\\
&~~~\times\left(\sum_{s,a}\sum_{k= 1}^m\frac{\sum_{i=1}^M\nu_{i,k}(s,a)}{\sqrt{N_k(s,a)}}+\sum_{s,a}\sum_{k= 1}^m\frac{\sum_{i=1}^M\nu_{i,k}(s,a)}{\sqrt{N_k(s,a)}}\right)\nonumber\\
&\leq (\sqrt{2}+1)2\left(\sqrt{7\log(2MSAT) + 1}+D\sqrt{14S\log(2MAT)}\right)\sum\nolimits_{s,a}\left(\sqrt{N(s,a)} + \sum_{k=1}^m\frac{M-1}{\sqrt{N_k(s,a)}}\right)\label{eq:bound_z}\\
&\leq (\sqrt{2}+1)2\left(\sqrt{7\log(2MSAT) + 1}+D\sqrt{14S\log(2MAT)}\right)\sum\nolimits_{s,a}\left(\sqrt{N(s,a)} + \sum_{k=1}^m(M-1)\right)\label{eq:bound_M}\\
&\leq (\sqrt{2}+1)2\left(\sqrt{7\log(2MSAT) + 1}+D\sqrt{14S\log(2MAT)}\right)\nonumber\\
&~~~\times\left(\sqrt{\left(\sum\nolimits_{s,a}1\right)\left(\sum\nolimits_{s,a}N(s,a)\right)} + m(M-1)SA\right)\label{eq:cauchy_schwarz}\\
&\leq (\sqrt{2}+1)2\left(\sqrt{7\log(2MSAT) + 1}+D\sqrt{14S\log(2MAT)}\right)\left(\sqrt{SAMT}+m(M-1)SA\right)
\end{align}
Equation \eqref{eq:bound_z} follows from Lemma \ref{lem:sum_of_roots} and Lemma \ref{lem:bounded_ep_length}. Further, we note that if $N(s,a) = 0$, then $N_k(s,a) =0$ for all $k$, and  $\nu_{i,k} = 0$ for all $k,i$. Hence, $N_k(s,a) \geq 1$ for all $k$ in Equation \eqref{eq:bound_z} and this gives Equation \eqref{eq:bound_M}. Equation \eqref{eq:cauchy_schwarz} follows from the Cauchy Schwarz inequality.

This completes part \textbf{3.}

    \item \textbf{Case where the true MDP $\mathcal\notin \mathscr{M}(t_k)$ for some $k$:} For this case, we use a trivial bound of $1$ for each agent $i$ at each time step $t$. This is because the reward $r_{i,t}$ lie in $[0,1]$ for all $i\in[M]$ and for all $t = 1, 2, \cdots, T$. Using this, we show that the regret remains bounded by $\sqrt{MT}$ with high probability . We bound the regret incurred for this case in part \textbf{4.}. 
    
\textbf{4. Regret when the estimated MDP is far from the true MDP $\mathcal{M}$:} %
 In the following, we bound the probability of the event when the $\ell_1$-deviation bounds in Equation \eqref{eq:prob_deviation} and the bounds in Equation \eqref{eq:reward_deviation} fail to hold. Note that the regret of any time step is upper bound by $1$ as $r_t\in[0,1]$. Hence,
\begin{align}
    \sum_{i=1}^M\sum_{k=1}^m\nu_{i,k}(s,a)\bm{1}_{\{\mathcal{M}\notin\mathscr{M}(t_k)\}}&=  \sum_{k=1}^m \sum_{i=1}^M\nu_{i,k}(s,a)\bm{1}_{\{\mathcal{M}\notin\mathscr{M}(t_k)\}}\\
    &\leq  \sum_{k=1}^m N_k(s,a)\bm{1}_{\{\mathcal{M}\notin\mathscr{M}(t_k)\}}\label{eq:sum_nu_k_N_k}\\
    &\leq  \sum_{k=1}^m Mt_k\bm{1}_{\{\mathcal{M}\notin\mathscr{M}(t_k)\}}\\
    &\leq  \sum_{t=1}^T Mt \bm{1}_{\{\mathcal{M}\notin\mathscr{M}(t)\}}\\
    &\leq  M\sum\nolimits_{t=1}^{(T/M)^{1/4}}t + \sum\nolimits_{t=(T/M)^{1/4}+1}^Tt\bm{1}_{\{\mathcal{M}\notin\mathscr{M}(t)\}}\nonumber\\
    &\leq  \sqrt{MT} + \sum\nolimits_{t=(T/M)^{1/4}+1}^Tt\bm{1}_{\{\mathcal{M}\notin\mathscr{M}(t)\}}
\end{align}
where \eqref{eq:sum_nu_k_N_k} follows from Lemma \ref{lem:bounded_ep_length}. Now, we bound the probability of the event $\{\mathcal{M}\notin\mathscr{M}(t)\}$ with $1/(15(Mt)^6)$ for all $t$ using Lemma \ref{lem:prob_bound} (which uses Lemma \ref{lem:prob_deviation}, refer Appendix \ref{sec:prob_bound} for a detailed proof). Taking union bounds over $\{\mathcal{M}\notin\mathscr{M}(t_k)\}, t\geq (T/M)^{1/4}+1$, we get
\begin{align}
    \sum_{t=(T/M)^{1/4}+1}^T\mathbb{P}(\bm{1}_{\{\mathcal{M}\notin\mathscr{M}(t)\}}) &\leq \sum_{t=(T/M)^{1/4}+1}^\infty \mathbb{P}(\bm{1}_{\{\mathcal{M}\notin\mathscr{M}(t)\}})\nonumber\\
    &\leq\int_{t=(T/M)^{1/4}}^\infty \mathbb{P}(\bm{1}_{\{\mathcal{M}\notin\mathscr{M}(t)\}})\nonumber\\
    &\leq \int_{t=(T/M)^{1/4}}^\infty \frac{1}{(Mt)^6}\leq \frac{M^{5/4}}{12M^6T^{5/4}}\leq \frac{1}{12(MT)^{5/4}}\nonumber
\end{align}
This complete part \textbf{4} of the regret sources and all the cases pertaining to the true MDP $\mathcal{M}$.
\end{enumerate}

Summing over all the possible sources of regret, we obtain the required bound on the regret. Further, using union bound on all the event when the concentration bounds fails to hold, we establish that the regret bound in Equation \eqref{eq:regret_bounds} holds with high probability.
\end{proof}

\newpage
\section{Bound on number of communication rounds} \label{sec:comm_round_bound_proof}
\begin{proof}
Let $N(s,a)$ be the total number of visits to the state action pair $(s,a)$ across all agents when algorithm \NAM\ terminates. Let $K(s,a)$ denote the synchronization rounds requested when $\nu_{i,k}\geq N_k(s,a)/M$ for any agent $i\in[M]$. Now, assuming that agent $i$ triggers the communication round because $\nu_{i,k}(s,a)\geq N_{k}(s,a)/M$ for some $(s,a)$ and $N_k(s,a) > 0$. Then for this $(s,a)$ we have, $N_{k+1}(s,a) \geq N_k(s,a) + \nu_{i,k}(s,a) \geq N_k(s,a)(1+1/M)$. Then, for $K(s,a) \ge 1$, we have:
\begin{align}
        N(s,a) &= \sum_{k=1}^m\sum_{i=1}^M\nu_{i,k}(s,a)\\
        &\geq \left(\frac{1}{M} + \sum_{k:\nu_{i,k}(s,a) \geq N_k(s,a)/M \text{ for any } i}\frac{N_{k}(s,a)}{M}\right)\nonumber\\
        &\geq \frac{1}{M}\left(1 + \sum_{j}N_{j}(s,a)\right)\label{eq:count_only_k_from_prev}\\
        &\geq \frac{1}{M}\left(1 + \sum_{j=1}^{K(s,a)}\left(1+\frac{1}{M}\right)^{j-1}\right)\label{eq:lower_bound_N_j}\\
        &\geq \left(1+\frac{1}{M}\right)^{K(s,a)}\label{eq:lower_bound_N_j2}
\end{align}
where Equation \eqref{eq:count_only_k_from_prev} follows for $j\in\{k:\nu_{i,k}(s,a) \geq N_k(s,a)/M \text{ for any } i\}$. Equation \eqref{eq:lower_bound_N_j} follows from using $K(s,a)$ lowest $j\in\{k:\nu_{i,k}(s,a) \geq N_k(s,a)/M \text{ for any } i\}\cup[m]$ and lower bounding the sequence $N_j(s,1)$ with $N_{j}(s,a) = 1$ for $j=1$. 

Now, we note \eqref{eq:lower_bound_N_j2} holds when $N(s,a)\ge 1$. Further, we note that whenever $N(s,a) = 0$ or $N(s,a) = 1$, we have $K(s,a) = 0$. So, to include the case of $N(s,a)=0$, we have
\begin{align}
    N(s,a) \geq (1+ \frac{1}{M})^{K(s,a)} - 1
\end{align}
Also, note that the total visitations $N(s,a)$ for all state action pairs is the total interactions of $M$ agents with their correspoinding environments which is $MT$, hence we have
\begin{align}
    MT &= \sum_{s,a}N(s,a)\\
       &\geq \sum_{s,a} \left[\left(1+\frac{1}{M}\right)^{K(s,a)} - 1\right]\\
       &\geq AS\left(1+\frac{1}{M}\right)^{\sum_{s,a}K(s,a)/AS} - SA\label{eq:AM-GM}\\
\implies\sum_{s,a}K(s,a)&\leq \frac{AS}{\log_2\left(1+\frac{1}{M}\right)} \log_2 \left(\frac{MT}{SA} + 1\right)\\
&\leq MAS\log_2\left(\frac{MT}{SA} + 1\right)\label{eq:log_xltx}
\end{align}
where Equation \eqref{eq:AM-GM} follows from Jensen's inequality. Equation \eqref{eq:log_xltx} follows from the fact that $\frac{1}{M}\leq \log_2(1+\frac{1}{M})\forall M\geq 1$. \\
Now, a new epoch is triggered when $\nu_{i,k}(s,a)\ge \max\{1, N_k(s,a)\}/M$. 
This means that a new epoch will be triggered whenever $N_k(s,a) = 0$ and any agent visits $(s,a)$ for the very first time. Hence, the algorithm can trigger $SA$ epochs even when $K(s,a) = 0$ for all $(s,a)$. Also, there is epoch $1$ trivially starts when the algorithm starts. This gives: 
\begin{align}
    m &\leq 1 + AS + \sum\nolimits_{s,a}K(s,a)\\
      &\leq 1 + 2MAS + MAS\log_2 \left(\frac{MT}{SA}\right),
\end{align}
for $T\geq SA/M$. This completes the proof.
\end{proof}
\newpage
\section{\textsc{mod-UCRL2} algorithm description }\label{sec:mod_UCRL2_algo}
\begin{algorithm}[h]
	\caption{Modified UCRL 2 at central server}
    \begin{algorithmic}[1]
            \STATE \textbf{Input: }{$S, A, M$}
            \STATE Set parameters $P(s,a,s') = 0\forall (s,a,s')\in\mathcal{S}\times\mathcal{A}\times\mathcal{S}$ and $\hat{r}(s,a) = 0, N(s,a) = 0~~\forall~(s,a)\in\mathcal{S}\times\mathcal{A}$.
            \FOR{Epochs: $k= 1, 2, \cdots$}
                \STATE Set $\nu_{k}(s, a) = 0\forall (s,a)\in\mathcal{S}\times\mathcal{A}$.
                \FOR{$(s,a)\in \mathcal{S}\times\mathcal{A}$}
                \STATE Set $N(s,a) = \sum_{s'}P(s,a,s')$.
                \STATE Set $\hat{p}(s,a,s') = \frac{P(s,a,s')}{\max\{1, N(s,a)\}}$
                \STATE Set $\hat{\Bar{r}}(s,a) = \frac{\hat{r}(s,a)}{\max\{1, N(s,a)\}}$
                \STATE Set $\Tilde{r}(s,a) = \hat{\Bar{r}}(s,a) + \sqrt{\frac{7\log(2MSAt)}{2\max\{1, N(s,a)\}}}$
                \STATE Set $d(s,a) = \sqrt{\frac{14S\log(2MAt)}{\max\{1, N(s,a)\}}}$
                \ENDFOR
                \STATE Set $\pi$ = \textsc{Extended Value Iteration}($\hat{p}, d, \Tilde{r}, \frac{1}{\sqrt{Mt}}$)
                \WHILE{$\nu_{i,k}(s,a) < \max\{1, N_k(s,a)\}\forall (s,a)$}
                        \STATE Receive state $s_{i,t}$ from agent $i$
    	                \STATE Send action $a_{i,t}=\pi_k(s_{i,t})$ to agent $i$ and receive back reward $r_t$ and next state $s_{i,t+1}$.
    	                \STATE Set $$\nu_{k}(s_{i,t},a_{i,t}) = \nu_{k}(s_{i,t}, a_{i,t}) + 1$$$$P(s_{i,{i,t}}, a_{i,t}, s_{t+1}) = P_i(s_{i,t}, a_{i,t}, s_{i,t+1}) + 1$$$$\hat{r}(s_{i,t}, a_{i,t}) = \hat{r}(s_{i,t}, a_{i,t}) + r_t$$
    	                \STATE Set $i = i+1$.
    	                \IF{$i > M$}
    	                    \STATE $i = 1$
    	                    \STATE $t = t+1$
    	               \ENDIF
                \ENDWHILE
            \ENDFOR
    \end{algorithmic}
\end{algorithm}

\section{Proof of the Regret Bound for the Modified UCRL2 Algorithm}\label{sec:regret_proof_mod_ucrl}

\begin{proof}
We want to bound $\Delta(\mathcal{M}, \textsc{mod-UCRL2}, s, T)$. For this, we reuse most of the proof of Theorem \ref{thm:regret_bound}.

First note that the number of epochs of the \textsc{mod-UCRL2} algorithm, $m'$, is upper bounded by $1 + AS + AS\log_2(MT/SA)$. The proof mostly follows from Section \ref{sec:comm_round_bound_proof}. However, the server now performs the update whenever $\nu_k(s,a) = N_k(s,a)$. This results in growth factor of $2$ instead of $(1+1/M)$ in Equation \eqref{eq:lower_bound_N_j2}. 

We again use the fact that the EVI algorithm finds an $\epsilon$-optimal policy of the optimistic MDP in the neighborhood of the true MDP $\mathcal{M}$. Recall that, we define server time $t'$ to be a pair $(i, t)$. Let server time $t_k'$ to be the pair $\underline{i}_i,\underline{t}_k$ when the epoch $k$ starts. Then, all the definitions from the analysis of Theorem \ref{thm:regret_bound} will translate here with subscripts changed from $t$ to $t'$ and $t_k$ to $t_k'$. Further, we define $|t'| = |(i,t)| = M(t-1) + i$, as the total number of interactions the agent collectively did with their environments till the server time $t'$. We also define $t'+1$, and $t'-1$ as:
\begin{align}
    t_i' + 1 = \begin{cases}
        (i, t+1) \text{ for } t_i' = (i, t), i = K, 
        \\
        (i+1, t) \text{ for } t_i' = (i, t), i < K.
        \end{cases}\\
    t_i' - 1 = \begin{cases}
        (i-1, t) \text{ for } t_i' = (i, t), i > 1, 
        \\
        (K, t-1) \text{ for } t_i' = (i, t), i = 1.
        \end{cases}
\end{align}

The \textsc{mod-UCRL2} algorithm proceeds in epoch and generates a new policy for each epoch $k$ using an optimistic MDP from set $\mathscr{M}(t_k')$. Now, similar to the proof of Theorem \ref{thm:regret_bound}, we construct two cases as enumerated:
\begin{enumerate}[label=(\alph*)]
    \item \textbf{Case where the true MDP $\mathcal{M}$ lies in the set $\mathscr{M}(t_k')$ for all $k$:} In this case, we will bound the regret with the terms corresponding to parts \textbf{1},  \textbf{2}, and  \textbf{3}, of the four sources of regret as mentioned in the main text for Theorem \ref{thm:regret_bound}, and analyze each of them.
    
    Again, from the definition of regret, we have:
    \begin{align}
    \Delta(\mathcal{M}, \textsc{mod-UCRL2}, s, T)&= \sum_{t'=1}^{(M,T)}\rho^* - \sum_{t'=1}^{(M,T)}r_{t'}\nonumber\\
    &= \sum_{t'=1}^{(M,T)}(\rho^* - \mathbb{E}[r_{t'}]) + \sum_{t'=1}^{(M,T)}\left(\mathbb{E}[r_{t'}] - r_{t'}\right)\nonumber\\
    &= \sum_{k= 1}^m\sum_{s,a}\nu_{k}(s,a)\left(\rho^*- \bar{r}(s,a)\right)+ \sum_{t'=1}^{(M,T)}\left(\mathbb{E}[r_{t'}] - r_{t'}\right)\label{eq:mod_exp_reward_to_mean_reward}\\
    &\leq \sum_{k= 1}^m\sum_{s,a}\nu_{k}(s,a)\left(\Tilde{\rho}_{t_k} +  \frac{1}{\sqrt{|t_k'|}} - \bar{r}(s,a)\right)+ \sum_{t'=1}^{(M,T)}\left(\mathbb{E}[r_{t'}] - r_{t'}\right)\label{eq:mod_optimism}
\end{align}

Equation \eqref{eq:mod_exp_reward_to_mean_reward} follows from the fact that at step $t'$ playing action $a$ in state $s$, the expected reward $\mathbb{E}[r_{t'}] = \bar{r}(s,a)$ for any agent $i\in[M]$. Equation \eqref{eq:mod_optimism} follows from the fact that the \textsc{Extended Value Iteration} algorithm returns an $1/\sqrt{|t_k'|}$-optimal policy for the optimistic MDP $\Tilde{\mathcal{M}_{t'}}\in\mathscr{M}(t)$ at server time $t_k$ and hence the gain $\Tilde{\rho}_{t_k'}$ of the policy $\pi_k$ for $\Tilde{\mathcal{M}}$ satisfies $\Tilde{\rho}_{t_k'}\geq \rho^*(\Tilde{\mathcal{M}}) - \frac{1}{\sqrt{|t_k'|}}\geq \rho^*(\mathcal{M}) - \frac{1}{\sqrt{|t_k'|}} = \rho^* - \frac{1}{\sqrt{|t_k'|}}$. Further, the algorithm runs the policy $\pi_k$ for entire epoch $k$.

Separating terms in Equation \eqref{eq:mod_optimism}, similar to that in the proof of Theorem \ref{thm:regret_bound}, we reduce the problem into sum of the three parts as explained below. 
\\
\\
\textbf{1. Regret from deviating from expected reward:} Since the reward $r_{t'}\in[0,1]$, we first bound the second summation in Equation \eqref{eq:mod_optimism} with high probability using Hoeffding's inequality (Lemma \ref{lem:Hoeffdings}):

\begin{align}
    \mathbb{P}\left(\sum_{t'=1}^{(M,T)}\left(\mathbb{E}[r_{t'}] - r_{t'}\right)
    \leq  \sqrt{\frac{5}{8}MT\log(8MT)}\right) &= \mathbb{P}\left(\sum_{i=1}^M\sum_{t=1}^T\left(\mathbb{E}[r_{i,t}] - r_{i,t}\right)
    \leq  \sqrt{\frac{5}{8}MT\log(8MT)}\right)\\
    &\leq \exp{\left(-\frac{2}{MT}\frac{5}{8}MT\log(8MT)\right)}\label{eq:mod_reward_exp_reward_bound}\\
    &\leq \left(\frac{1}{4MT}\right)^{5/4}< \frac{1}{12(MT)^{5/4}}
\end{align}
This completes part \textbf{1.}

We now want to bound the first term in Equation \eqref{eq:mod_optimism}. Based on the assumption,  the true MDP $\mathcal{M}$ lies in the set $\mathscr{M}(t_k')$. Let $\Tilde{\mathcal{M}}_k = (\mathcal{S}, \mathcal{A}, \Tilde{P}_k, , \Tilde{r})$ be the optimistic MDP in $\mathscr{M}(t')$. Then, we overload the notation and let $\Tilde{P}_k(s'|s) = \Tilde{P}(s'|s,\pi_k(s))$ denote the transition probability matrix for the Markov Chain induced by the policy $\pi_k$ on MDP $\Tilde{\mathcal{M}}$. Also, let $\Tilde{v}_k$ denote the bias vector of the policy $\pi_k$ for MDP $\Tilde{\mathcal{M}}$. Then $\Tilde{P}_k\Tilde{v}_k$ is a vector and $\Tilde{P}_k\Tilde{v}_k(s)$ denote its $s^{th}$ element. Similarly, we overload the true transition probability $P$ into a matrix $P_k$ and obtain a corresponding vector $P_kv_k$. Also, similar to the proof of Theorem \ref{thm:regret_bound}, we have
\begin{align}
|\Tilde{r}- \bar{r}(s,a)|  &\leq 2d_k(s,a) := \sqrt{\frac{7\log(2AS|t_k'|}{2\max\{1, N_k(s,a)\}}}\label{eq:mod_reward_diam}
\end{align}

We now consider the gain-bias relationship from \citep{puterman1994markov} as:
\begin{align}
    &\sum_{k=1}^m\sum_{s,a}\nu_{k}(s,a)\left(\Tilde{\rho}_{t_k'} + \frac{1}{\sqrt{|t_k'|}}- \bar{r}(s,a)\right)\\
    &= \sum_{k=1}^m\sum_{s,a}\nu_{k}(s,a)\left(\Tilde{\rho}_{t_k'}- \Tilde{r}_k(s,a) + \left(\Tilde{r}_k(s,a) - \bar{r}(s,a)\right) + \frac{1}{\sqrt{|t_k'|}}\right)\nonumber\\
    &=\sum_{k=1}^m\sum_{s,a}\nu_{k}(s,a)\left(\Tilde{P}_k\Tilde{v}_k(s) - \Tilde{v}_k(s)+ \left(\Tilde{r}_k(s,a) - \bar{r}(s,a)\right) + \frac{1}{\sqrt{|t_k'|}}\right)\label{eq:mod_gain_to_bias}\\
    &=\sum_{k=1}^m\sum_{s,a}\nu_{k}(s,a)(P_k\Tilde{v}_k(s) - \Tilde{v}_k(s))\nonumber\\
    &~~~~+ \sum_{k=1}^m\sum_{s,a}\nu_{k}(s,a)\left((\Tilde{P}_k\Tilde{v}_k(s) - P_k\Tilde{v}_k(s)) + \left(\Tilde{r}_k(s,a) - \bar{r}(s,a)\right) + \frac{1}{\sqrt{|t_k'|}}\right) \label{eq:mod_regret_split}
\end{align}
where Equation \eqref{eq:mod_gain_to_bias} follows from Equation \eqref{eq:gain_bias_eqn}. Again, the first term in Equation \eqref{eq:mod_regret_split} denotes part \textbf{2}, or the regret from deviating from the expected next state. The second term in Equation \eqref{eq:mod_regret_split} denotes part \textbf{3}, or the regret when algorithm is not accurately optimizing for the true MDP.
\\
\\
\textbf{2. Regret from deviating from the expected next state:} Let $t_{k+1}'-1 = (\bar{i}_i, \bar{t}_k)$ be the last server time step of the epoch. Using this, we can upper bound the first term of the Equation \eqref{eq:mod_regret_split} as:
\begin{align}
    \sum_{k=1}^m\sum_{s,a}\nu_{k}(s,a)(P\Tilde{v}_k(s) - \Tilde{v}_k(s))&= \sum_{k=1}^m\sum_{t'=t_k'}^{t_{k+1}'-1}(P\Tilde{v}_k(s_{t'}) - \Tilde{v}_k(s_{t'}))\label{eq:mod_return_s_to_s_t}\\
    &= \sum_{k=1}^m\sum_{t'=t_k'}^{t_{k+1}'-1}(P\Tilde{v}_k(s_{t'}) - \Tilde{v}_k(s_{t'+1}))+\sum_{k=1}^m(\Tilde{v}_k(s_{t_{k+1}})-\Tilde{v}_k(s_{t_{k}}))\\
    &= \sum_{k=1}^m\sum_{t'=t_k'}^{t_{k+1}'-1}(P\Tilde{v}_k(s_{t'}) - \Tilde{v}_k(s_{t'+1}))+\sum_{k=1}^m(\Tilde{v}_k(s_{(\bar{i}_{k+1}, \bar{t}_{k+1})})-\Tilde{v}_k(s_{(\underline{i}_{k}, \underline{t}_{k})}))\\
    &\leq \sum_{k=1}^m\sum_{t'=t_k'}^{t_{k+1}'-1}(P\Tilde{v}_k(s_{t'}) - \Tilde{v}_k(s_{t'+1}))+mD\label{eq:mod_span_leq_D}\\
    &\leq \sum_{k=1}^m\sum_{t'=(\underline{i}_k,\underline{t}_k)}^{\bar{i}_k,\bar{t}_k}(P\Tilde{v}_k(s_{t'}) - \Tilde{v}_k(s_{t'+1}))+mD\\
    &\leq \sum_{i=1}^M\sum_{k=1}^m\sum_{t=\underline{t}_k}^{\bar{t}_k}(P\Tilde{v}_k(s_{i,t}) - \Tilde{v}_k(s_{i,t+1}))+mD\\
    &\leq \sum_{i=1}^M\sum_{k=1}^m\sum_{t=\underline{t}_k}^{\bar{t}_k}X_{i,t}+mD\\
    &\leq \sum_{i=1}^M\sum_{t=1}^{T}X_{i,t}+mD\label{eq:now_use_sum_martingale}
\end{align}  
where Equation \eqref{eq:mod_return_s_to_s_t} follows from the fact that $\sum_{a}\nu_{k}(s,a) = \sum_{t'=t_k'}^{t_{k+1}'-1}\bm{1}\{s_{t'} = s\}$. 
Equation \eqref{eq:mod_span_leq_D} follows from Lemma \ref{lem:span_bounded} and from the fact that although states $s_{(\bar{i}_{k+1}, \bar{t}_{k+1})}$ and $s_{(\underline{}{i}_{k}, \underline{}{t}_{k})}$ may belong to two different processes, the processes are identical. 
Now, we note that the first term in Equation \eqref{eq:now_use_sum_martingale} is  sum of $M$ independent Martingale sequences of length $T$ such that $0\leq \Tilde{v}_k(s_{(i,t)})\leq D$. Hence, using the Lemma \ref{lem:independent_martingale_sum} for $c=2D$, with probability at least $1-1/(12(MT)^{5/4})$, we get:
\begin{align}
    \sum_{k= 1}^m\sum_{s,a}\nu_{k}(s,a)(P\Tilde{v}_k(s) - \Tilde{v}_k(s)) &\leq D\sqrt{2MT\frac{5}{4}\log(8MT)} + MmD\label{eq:mod_independent_Martingale}
\end{align}
This completes part \textbf{2.}
\\
\\
\textbf{3. Regret from not optimizing for the true MDP:} We now attempt to bound the second term in Equation \eqref{eq:mod_regret_split} which denotes the distance of the estimated transition probability and the true transition probabilities:

\begin{align}
&\sum_{k= 1}^m\sum_{s,a}\nu_{k}(s,a)((\Tilde{P}_k\Tilde{v}_k(s) - P\Tilde{v}_k(s))+
\left(\Tilde{r}_k(s,a) - \bar{r}(s,a)\right) + \frac{1}{\sqrt{|t_k'|}})\nonumber\\
&\leq\sum_{s,a}\sum_{k= 1}^m\nu_{k}(s,a)((\Tilde{P}_k\Tilde{v}_k(s) - P\Tilde{v}_k(s))+ d_k(s,a) \frac{1}{\sqrt{|t_k'|}})\nonumber\\
&\leq\sum_{k= 1}^m\nu_{k}(s,a)\left(\|\Tilde{P}_k - P\|_1\|\Tilde{v}_k(s)\|_\infty+ d_k(s,a) + \frac{1}{\sqrt{|t_k'|}}\right)\label{eq:mod_holders}\\
&\leq \sum_{s,a}\sum_{k= 1}^m\nu_{k}(s,a)\left(D\|\Tilde{P}_k - P\|_1+ d_k(s,a) + \frac{1}{\sqrt{|t_k'|}}\right)\label{eq:mod_replace_span_with_D}\\
&\leq \sum_{s,a}\sum_{k= 1}^m\nu_{k}(s,a)\left(D\|\Tilde{P}_k - P\|_1 + d_k(s,a)\right) +  \sum_{s,a}\sum_{k= 1}^m\nu_{k}(s,a)\left( \frac{1}{\sqrt{|t_k'|}}\right)
\end{align}
where Equation \eqref{eq:mod_holders} follows from H\"{o}lder's inequality. Equation \eqref{eq:mod_replace_span_with_D} follows from Lemma \ref{lem:span_bounded} and noting that $\Tilde{v}$ is translation invariant, and hence we can choose $\min_s \Tilde{v}(s) \geq 0$. Now, note that $\Tilde{P}_k, P_k$, both satisfy Equation \eqref{eq:prob_deviation}. Hence, $\|\Tilde{P}_k-P_k\|_1$ is upper bounded by the diameter of the set $\mathscr{P}_{t_k'}$. Further, $N_k(s,a)\leq \sum_{s,a} N_k(s,a) = |t_k|$. This gives us:

\begin{align}
&\leq \sum_{s,a}\sum_{k= 1}^m\nu_{k}(s,a)\frac{2D\sqrt{14S\log(2A|t_k'|)} + 2\sqrt{7\log(2SA|t_k'|)}+1}{\sqrt{N_k(s,a)}}\\
&\leq 2\left(D\sqrt{14S\log(2MAT)}+\sqrt{7\log(2MSAT) + 1}\right)\sum_{s,a}\sum_{k= 1}^m\frac{\nu_{k}(s,a)}{\sqrt{N_k(s,a)}}\nonumber\\
&\leq (\sqrt{2}+1)\left(2D\sqrt{14S\log(2MAT)}+\sqrt{7\log(2MSAT) + 1}\right)\sum\nolimits_{s,a}\left(\sqrt{N(s,a)}\right)\label{eq:mod_bound_z}\\
&\leq (\sqrt{2}+1)\left(2D\sqrt{14S\log(2MAT)}+\sqrt{7\log(2MSAT) + 1}\right)\sum\nolimits_{s,a}\left(\sqrt{N(s,a)}\right)\label{eq:mod_bound_M}\\
&\leq (\sqrt{2}+1)\left(2D\sqrt{14S\log(2MAT)}+\sqrt{7\log(2MSAT) + 1}\right)\left(\sqrt{\left(\sum\nolimits_{s,a}1\right)\left(\sum\nolimits_{s,a}N(s,a)\right)}\right)\label{eq:mod_cauchy_schwarz}\\
&\leq (\sqrt{2}+1)\left(2D\sqrt{14S\log(2MAT)}+\sqrt{7\log(2MSAT) + 1}\right)\left(\sqrt{SAMT}\right)
\end{align}
Equation \eqref{eq:mod_bound_z} follows from Lemma \ref{lem:sum_of_roots}. Further, we note that if $N(s,a) = 0$, then $N_k(s,a) =0$ for all $k$, and  $\nu_{k} = 0$ for all $k,i$. Hence, $N_k(s,a) \geq 1$ for all $k$ in Equation \eqref{eq:mod_bound_z} and this gives Equation \eqref{eq:mod_bound_M}. Equation \eqref{eq:mod_cauchy_schwarz} follows from the Cauchy Schwarz inequality. This completes part \textbf{3.}

\item  \textbf{Case where the true MDP $\mathcal\notin \mathscr{M}(t_k')$ for some $k$:} For this case, we use a trivial bound of $1$ at each server time step $t'$. This is because the reward $r_{(i,t)}$ lie in $[0,1]$ for all $i\in[M]$ and for all $t = 1, 2, \cdots, T$. Using this, we show that the regret remains bounded by $\sqrt{MT}$ with high probability . We bound the regret incurred for this case in part \textbf{4.}

\textbf{4. Regret when the estimated MDP is far from the true MDP $\mathcal{M}$:} We bound the probability of  $\mathcal{M}\notin\mathscr{M}(t_k')$ for some $k$, as follows:%
\begin{align}
    \sum_{k=1}^m\nu_{k}(s,a)\bm{1}_{\{\mathcal{M}\notin\mathscr{M}(t_k')\}}&=  \sum_{k=1}^m \nu_{k}(s,a)\bm{1}_{\{\mathcal{M}\notin\mathscr{M}(t_k')\}}\\
    &\leq  \sum_{k=1}^m N_k(s,a)\bm{1}_{\{\mathcal{M}\notin\mathscr{M}(t_k')\}}\label{eq:mod_sum_nu_k_N_k}\\
    &\leq  \sum_{k=1}^m |t_k'|\bm{1}_{\{\mathcal{M}\notin\mathscr{M}(t_k')\}}\\
    &\leq  \sum_{|t'|=1}^{MT} |t'| \bm{1}_{\{\mathcal{M}\notin\mathscr{M}(t')\}}\\
    &\leq  \sum\nolimits_{t=1}^{(MT)^{1/4}}|t'| + \sum\nolimits_{t=(MT)^{1/4}+1}^{MT}|t'|\bm{1}_{\{\mathcal{M}\notin\mathscr{M}(t')\}}\nonumber\\
    &\leq  \sqrt{MT} + \sum\nolimits_{|t'|=(MT)^{1/4}+1}^{MT}|t'|\bm{1}_{\{\mathcal{M}\notin\mathscr{M}(t')\}}
\end{align}
where \eqref{eq:mod_sum_nu_k_N_k} follows from Lemma \ref{lem:bounded_ep_length}. Now, we bound the probability of the event $\{\mathcal{M}\notin\mathscr{M}(t')\}$ with $1/(15(|t'|)^6)$ for all $t'$ using Lemma \ref{lem:prob_bound} (which uses Lemma \ref{lem:prob_deviation}, refer Appendix \ref{sec:prob_bound} for a detailed proof). Taking union bounds over $\{\mathcal{M}\notin\mathscr{M}(t_k')\}, |t'|\geq (MT)^{1/4}+1$, we get
\begin{align}
    \sum_{|t'|=(MT)^{1/4}+1}^T\mathbb{P}(\bm{1}_{\{\mathcal{M}\notin\mathscr{M}(t')\}}) &\leq \sum_{|t'|=(MT)^{1/4}+1}^\infty \mathbb{P}(\bm{1}_{\{\mathcal{M}\notin\mathscr{M}(t')\}})\nonumber\\
    &\leq\int_{|t'|=(MT)^{1/4}}^\infty \mathbb{P}(\bm{1}_{\{\mathcal{M}\notin\mathscr{M}(t')\}})\nonumber\\
    &\leq \int_{|t'|=(MT)^{1/4}}^\infty \frac{1}{(|t'|)^6}\leq  \frac{1}{12(MT)^{5/4}}\nonumber
\end{align}
This complete part \textbf{4} of the regret sources and all the cases pertaining to the true MDP $\mathcal{M}$.
\end{enumerate}

Summing over all the possible sources of regret, we obtain the required bound on the regret. Further, using union bound on all the event when the concentration bounds fails to hold, we establish that the regret bound in Equation \eqref{eq:regret_bounds} holds with high probability.
\end{proof}

\end{document}